\newcommand{\etal}{\mbox{\em et al.\ }}
\newcommand{\Ad}{\mbox{\rm Ad}}
\newcommand{\tr}{\mbox{\rm tr}}    
\newfont{\gothic}{eufm10 scaled\magstep0}
\newcommand{\gothsl}{\mbox{\gothic sl}}
\newcommand{\rr}{\mbox{$\mathbb R$}}
\newtheorem{theorem}{Theorem}
\newtheorem{lemma}{Lemma}
\newtheorem{proposition}{Proposition}
\newtheorem{definition}{Definition}
\newtheorem{assumption}{Assumption}
\newtheorem{remark}{Remark}
\begin{document}

\title{Feature-based Recursive Observer Design for Homography Estimation}

\author{Minh-Duc Hua, Jochen Trumpf, Tarek Hamel, Robert Mahony, and Pascal Morin
\thanks{M.-D. Hua and T. Hamel are with I3S UNS--CNRS, Nice-Sophia Antipolis, France.
  email: {\tt hua(thamel)@i3s.unice.fr}.}
\thanks{J. Trumpf and R. Mahony are with the Research School of Engineering,
Australian National University, Canberra, Australia. email: {\tt
    Robert.Mahony(Jochen.Trumpf)@anu.edu.au}.}
\thanks{P. Morin is with ISIR UPMC--CNRS, Paris, France. email:
{\tt morin@isir.upmc.fr}.}
}

\date{}


\maketitle

\begin{abstract}
This paper presents a new algorithm for online estimation of a sequence of homographies applicable to image sequences obtained from robotic vehicles equipped with vision sensors.
The approach taken exploits the underlying Special Linear group structure of the set of homographies along with gyroscope measurements and direct point-feature correspondences between images to develop temporal filter for the homography estimate.
Theoretical analysis and experimental results are provided to demonstrate the robustness of the proposed algorithm.
The experimental results show excellent performance even in the case of very fast camera motion (relative to frame rate), severe occlusion, and in the presence of specular reflections.
\end{abstract}

\nocite{HuaCDC2011}

\section{INTRODUCTION}
When a robotic vehicle equipped with a vision sensor is manoeuvering in a built environment, consisting primarily of planar or near planar surfaces, then the nature of the environment can be exploited in the vision processing algorithms.
Different images of the same planar surface are related by homography mappings, and homographies have been used extensively in robotic applications as a vision primitive.
Homography-based algorithms have been used for estimation of the rigid-body pose of a vehicle equipped with a camera \cite{2005_Wang_iros,RM_2007_Mufti_dicta,2008_Scaramuzza_TRO}.
Navigation of robotic vehicles has been developed based on homography sequences \cite{2005_Fang_TSMC,2007_Fraundorfer_iros,2014_Hua_ifac} and one of the most successful visual servo control paradigms uses homographies  \cite{MalChaBou1999,2009_Malis.icra}.
Homography-based methods are particularly well suited to navigation of unmanned aerial vehicles \cite{2007_Caballero_icra,2010_Mondragon_icra,2011_de_Plinval_icra} where the ground terrain is viewed from a distance and the relief of surface features is negligible compared to the vehicles distance from the scene.

Computing homographies from point correspondences has been extensively studied in the last fifteen years \cite{Hartley2003}.
The quality of the homography estimate obtained depends heavily on the number and quality of the data points used in the estimation as well as the algorithm employed.
For a  well textured scene, the state-of-the-art methods can provide high quality homography estimates at the cost of significant computational effort (see \cite{Mei2008} and references therein).
For a scene with poor texture, and consequently few reliable feature point correspondences, existing homography estimation algorithms perform poorly.
Robotic vehicle applications, however, provide temporal sequences of images and it seems natural to exploit the temporal correlation rather than try to compute individual raw homographies for each pair of frames.
Zhang \etal \cite{2007_Zhang_PR} used image flow computed from a pair of images to compute the relative homography, although this method still only considers isolated pairs of images.
In recent work by the authors \cite{2009_Malis.icra,2012_Mahony.IJC} a nonlinear observer \cite{Bonnabel,toappear_Lageman.TAC} for homography estimation was proposed based on the group structure of the set of homographies, the Special Linear group $\mathrm{SL}(3)$ \cite{Benhimane-IJRR-07}.
This observer uses velocity information to interpolate across a sequence of images and improve the individual homography estimates.
The observer proposed in \cite{2009_Malis.icra,2012_Mahony.IJC} still requires individual image homographies to be computed for each image, which are then smoothed using filtering techniques.
Although this approach \cite{2009_Malis.icra,2012_Mahony.IJC} provides improved homography estimates, it comes at the cost of running both a classical homography algorithm as well as the temporal filter algorithm, and only functions if each pair of images has sufficient data available to compute a raw homography.

In this paper, we consider the question of deriving an observer for a sequence of image homographies that takes image point feature correspondences directly as input.
The proposed approach takes a sequence of images associated with a continuous variation of the reference image, the most common case being a moving camera viewing a planar scene, a situation typical of robotic applications.
The proposed nonlinear observer is posed on the Special Linear group $\mathrm{SL}(3)$, that is in one-to-one correspondence with the group of homographies \cite{Benhimane-IJRR-07}, and uses velocity measurements to propagate the homography estimate and fuse this with new data as it becomes available \cite{2009_Malis.icra,2012_Mahony.IJC}.
A key advance on prior work by the authors is the formulation of a point feature innovation for the observer that incorporates point correspondences directly in the observer without requiring reconstruction of individual image homographies.
This saves considerable computational resources and makes the proposed algorithm suitable for embedded systems with simple point tracking software. In addition, the algorithm is well posed even when there is insufficient data for full reconstruction of a homography.
For example, if the number of corresponding points between two images drops below four it is impossible to algebraically reconstruct an image homography and the existing algorithms fail \cite{Hartley2003}.
In such situations, the proposed observer will continue to operate, incorporating what information is available and relying on propagation of prior estimates where necessary.
Finally, even if a homography can be reconstructed from a small set of feature correspondences, the estimate is often unreliable and the associated error is difficult to characterize.
The proposed algorithm integrates information from a sequence of images, and noise in the individual feature correspondences is filtered through the natural low-pass response of the observer, resulting in a highly robust estimate.
As a result, the authors believe that the proposed observer is ideally suited for estimation of homographies based on small windows of image data associated with specific planar objects in a scene, poorly textured scenes,  and real-time implementation; all of which are characteristic of requirements for homography estimation in robotic vehicle applications.

The paper is organized into seven sections including the introduction and the concluding sections.
Section~\ref{sec:Background} presents a brief recap of the Lie group structure of the set of homographies.
In Section~\ref{sec:Observer}, based on a recent advanced theory for nonlinear observer design \cite{MahonyNOLCOS13}, a nonlinear observer on $\mathrm{SL(3)}$ is proposed using direct measurements of known inertial directions and the knowledge of the group velocity.
A rigourous stability analysis is provided in this section.
In Section~\ref{sec:application}, the homography and associated homography velocity are related to rigid-body motion of the camera and two observers are derived for the case where only the angular velocity of the camera is known, a typical scenario in robotic applications.
An initial constructed example in Section~\ref{section:experiment} confirms the robust performance of the proposed algorithm on real image data. Section~\ref{sec:experimentII} provides an application of our approach to a real world problem in image stabilization.
Two videos, showing the results of Sections~\ref{section:experiment} and \ref{sec:experimentII}, respectively, are provided as supplementary material.
The results show excellent performance even in the case of very fast camera motion (relative to frame rate), severe occlusion, and in the presence of specular reflections.


\section{Preliminary material}
\label{sec:Background}
\subsection{Camera projections} \label{Projection}
Visual data is obtained via a projection of an observed scene onto the camera image surface. The projection is parameterised by two sets of parameters: intrinsic (``internal'' parameters of the camera such as the focal length, the pixel aspect ratio, etc.) and extrinsic (the pose, i.e. the position and orientation of the camera). Let ${\cal \mathring{A}}$ (resp.~${\cal A}$) denote projective coordinates for the image plane of a camera $\mathring{A}$ (resp.~$A$), and let $\{\mathring{A}\}$ (resp.~$\{A\}$) denote its (right-hand) frame of reference. Let $\xi \in \mathbb{R}^3$ denote the position of the frame $\{A\}$ with respect to $\{\mathring{A}\}$ expressed in $\{\mathring{A}\}$.  The orientation of the frame $\{A\}$ with respect to $\{\mathring{A}\}$, is given by a rotation matrix $R \in\mathrm{SO(3)}$, where $R\colon \{A\} \rightarrow \{\mathring{A}\}$ as a mapping. The pose of the camera determines a rigid body transformation from $\{A\}$ to $\{\mathring{A}\}$ (and visa versa). One has
\begin{eqnarray}\label{:eq:rigid}
\mathring{P} = RP+\xi
\end{eqnarray}
as a relation between the coordinates of the same point in the reference frame ($\mathring{P} \in \{\mathring{A}\}$) and in the current frame ($P \in \{A\}$). The camera internal parameters, in the commonly used approximation \cite{Hartley2003}, define a  $3\times 3$ matrix $K$ so that we can write\footnote{Most statements in projective geometry involve equality up to a multiplicative constant denoted by $\cong$.}:
\begin{equation}\label{:eq:proj-1}
\mathring{p}\cong K\mathring{P}, \quad p \cong KP,
\end{equation}
where $p \in {\cal A}$ is the image of a point when the camera is aligned with frame $\{A\}$, and can be written as $(x,y,w)^\top$ using the homogeneous coordinate representation for that 2D image point. Likewise, $\mathring{p} \in {\cal \mathring{A}}$ is the image of the same point viewed when the camera is aligned with frame $\{\mathring{A}\}$.

If the camera is calibrated (the intrinsic parameters are known), then all quantities can be appropriately scaled and the equation is written in a simple form:
\begin{eqnarray}\label{:eq:camera}
\mathring{p} \cong \mathring{P},\quad p \cong P.
\end{eqnarray}

\subsection{Homographies}

Since homographies describe image transformations of {\em planar} scenes, we begin by fixing a plane that contains the points of interest (target points).

\begin{assumption}
Assume a calibrated camera and that there is a planar surface $\pi$ containing a set of $n$ target points ($n \geq 4$) so that
\[
\Pi =\left\{ \mathring{P} \in R^3: \mathring{\eta}^\top \mathring{P} - \mathring{d} = 0\right\},
\]
where $\mathring{\eta}$ is the unit normal to the plane expressed in $\{\mathring{A}\}$ and $\mathring{d}$ is the distance of the plane to the origin of $\{\mathring{A} \}$.
\end{assumption}
From the rigid-body relationships~\eqref{:eq:rigid}, one has $P=R^\top\mathring{P}-R^\top\xi$. Define $\zeta= -R^\top \xi$.  Since all target points lie in a single planar surface one has
\begin{eqnarray}
P_i=R^\top\mathring{P}_i+\frac{\zeta{\mathring{\eta}}^\top}{\mathring{d}}\mathring{P}_i, \quad i = \{1,
\ldots, n\},
\end{eqnarray}
and thus, using~\eqref{:eq:camera}, the projected points obey
\begin{eqnarray}
p_i\cong \left(R^\top+\frac{\zeta{\mathring{\eta}}^\top}{\mathring{d}}\right)\mathring{p}_i,\quad i = \{1,
\ldots, n\}.
\end{eqnarray}
The projective mapping $H: {\cal A}\rightarrow {\cal \mathring{A}}$, $H :\cong \left(R^\top+\frac{\zeta{\mathring{\eta}}^\top}{\mathring{d}}\right)^{-1}$ is termed a homography and it relates the images of points on the plane $\Pi$ when viewed from two poses defined by the coordinate systems ${\cal A}$ and ${\cal
\mathring{A}}$, respectively. It is straightforward to verify that the homography $H$ can be written as follows:
\begin{equation}
H \cong\left(R  + \frac{ \xi {\eta}^\top}{d}\right),
\label{homog}\end{equation}
where $\eta$ is the normal to the observed planar surface expressed in the frame $\{A\}$ and $d$ is the orthogonal distance of the plane to the origin of $\{A\}$. One can verify that \cite{Benhimane-IJRR-07}:
\begin{align}
\eta &=R^\top\mathring{\eta} \\
d &= \mathring{d}-\mathring{\eta}^\top\xi=\mathring{d}+\eta^\top \zeta. \label{eq:dB}
\end{align}
The homography matrix contains the pose information $(R, \xi)$ of the camera from the frame $\{A\}$ (termed current frame) to the frame $\{\mathring{A}\}$ (termed reference frame). However, since the relationship between the image points and the homography is a projective relationship, it is only possible to determine $H$ up to a scale factor (using the image points relationships alone).

\subsection{Homography versus element of the Special Linear Group $\mathrm{SL}(3)$}

Recall that the Special Linear group $\mathrm{SL}(3)$ is defined as the set of all real valued $3 \times 3$ matrices with unit determinant:
\[
\mathrm{SL}(3) = \{ H \in \mathbb{R}^{3\times 3}\;|\; \det{H} = 1 \}.
\]
The Lie-algebra $\gothsl(3)$ for $\mathrm{SL}(3)$ is the set of matrices with trace equal to zero: $\gothsl(3) = \{ X \in \rr^{3 \times 3} \;|\; \tr(X) = 0 \}$.  The adjoint operator is a mapping $\Ad : \mathrm{SL}(3) \times \gothsl(3) \rightarrow \gothsl(3)$ defined by:
\[
\Ad_H X = HXH^{-1}, \quad H \in \mathrm{SL}(3) , X \in \gothsl(3).
\]

Since a homography matrix $H$ is only defined up to scale then any homography matrix is associated with a unique matrix $\bar{H} \in \mathrm{SL}(3)$ by re-scaling
\begin{equation}
\bar{H} = \frac{1}{\det(H)^{\frac{1}{3}}} H \label{H-sl3}
\end{equation} such that $\det(\bar{H}) = 1$.
Every such matrix $\bar{H} \in \mathrm{SL}(3)$ occurs as a homography.
Moreover, the map
\begin{align*}
  w\colon &\mathrm{SL}(3)\times\mathbb{P}^{2}\longrightarrow\mathbb{P}^{2},\\
  &\quad\quad(H,p)\mapsto w(H,p)\cong \frac{Hp}{\vert Hp\vert}
\end{align*}
is a group action of $\mathrm{SL}(3)$ on the projective space $\mathbb{P}^2$ since
\begin{align*}
w(H_1, w(H_2,p)) &= w(H_1 H_2,p),\; w(I, p) = p,
\end{align*}
where $H_1, H_2$ and $H_1 H_2 \in \mathrm{SL}(3)$ and $I$ is the identity matrix, the unit element of $\mathrm{SL}(3)$. The geometrical meaning of the above property is that the 3D motion of the camera between views ${\cal A}_0$ and ${\cal A}_1$, followed by the 3D motion between views ${\cal A}_1$ and ${\cal A}_2$ is the same as the 3D motion between views ${\cal A}_0$ and ${\cal A}_2$. As a consequence, we can think of homographies as described by elements of $\mathrm{SL}(3)$.

Since any homography is defined up to a scale factor, we assume from now on that $H \in \mathrm{SL}(3)$:
\begin{equation}
\label{eq:homog-decomp}
H = \gamma\left(R  + \frac{ \xi {\eta}^\top}{d}\right).
\end{equation}
There are numerous approaches for determining $H$, up to this scale factor, cf.~for example \cite{MaVa2007}. Note that direct computation of the determinant of $H$ in combination with the expression of $d$ \eqref{eq:dB} and using the fact that $\det(H)=1$, shows that $\gamma=(\frac{d}{\mathring{d}})^{\frac{1}{3}}$.

Extracting $R$ and $\frac{\xi}{d}$ from $H$ is in general quite complex \cite{Benhimane-IJRR-07,zhang,weng,FaugerasLustman} and is beyond the scope of this paper.

\section{Nonlinear observer design on $\mathrm{SL(3)}$ based on direct measurements}
\label{sec:Observer}
In this section, the design of an observer for $H\in \mathrm{SL(3)}$ is based on a recent theory for nonlinear observer design directly on the output space \cite{MahonyNOLCOS13}.

\subsection{Kinematics and measurements}
Consider the kinematics of $\mathrm{SL(3)}$ given by
\begin{equation}\label{:eq:dotH}
\dot H = F(H,U) := H U,
\end{equation}
with $U \in \mathfrak{\mathfrak{sl}}(3)$ the group velocity. Assume that $U$ is measured. Furthermore, we dispose of a set of $n$ measurements $p_i \in \mathbb{P}^2$ in the body-fixed frame:
\begin{equation}\label{p_i}
p_i= h(H,\mathring{p}_i) :=\frac{H^{-1}\mathring{p}_i}{|H^{-1}\mathring{p}_i|},\quad i=\{1 \ldots n\},
\end{equation}
where $\mathring{p}_i \in \mathbb{P}^2$ are constant and known. For later use, define
\[
\mathring{p} :=(\mathring{p}_1,\cdots,\mathring{p}_n),\quad p:=(p_1,\cdots,p_n).
\]

\begin{definition}
A set ${\mathcal M}_n$ of $n\geq 4$ vector directions $\mathring{p}_i \in \mathbb{P}^2$, with $i=\{1 \ldots n\}$, is called \emph{consistent}, if it contains a subset ${\mathcal M}_4 \subset {\cal M}_n$ of $4$ constant vector directions such that all vector triplets in ${\mathcal M}_4$ are linearly independent.
\end{definition}
This definition implies that if the set ${\cal M}_n$ is consistent then, for all $\mathring{p}_i \in {\cal M}_4$ there exists a unique set of three non vanishing scalars $b_j \neq 0$ ($j \neq i$) such that
\[
\mathring{p}_i= \frac{y_i}{|y_i|} \mbox{ where } y_i=\sum_{j=1 (j\neq i)}^4 b_j \mathring{p}_j.
\]

We verify that $\mathrm{SL(3)}$ is a symmetry group with group actions $\phi: \mathrm{SL(3)} \times \mathrm{SL(3)} \longrightarrow \mathrm{SL(3)}$, $\psi: \mathrm{SL(3)}\times \mathfrak{sl}(3) \longrightarrow \mathfrak{sl}(3)$ and $\rho : \mathrm{SL(3)}\times \mathbb{P}^2 \longrightarrow \mathbb{P}^2$ defined by
\[
\begin{array}{lcl}
\phi(Q,H) &:=& H Q,\\
\psi(Q,U) &:=& Ad_{Q^{-1}} U = Q^{-1} U Q,\\
\rho(Q,p) &:=& \frac{Q^{-1} p}{|Q^{-1} p|}.
\end{array}
\]
Indeed, it is straightforward to verify that $\phi$, $\psi$, and $\rho$ are {\em right group actions} in the sense that
$\phi(Q_2, \phi(Q_1,H)) = \phi(Q_1Q_2,H)$, $\psi(Q_2, \psi(Q_1,U)) = \psi(Q_1Q_2,U)$, and $\rho(Q_2, \rho(Q_1,p)) = \rho(Q_1Q_2,p)$, for all $Q_1,Q_2,H \in \mathrm{SL(3)}$, $U\in \mathfrak{sl}(3)$, and $p\in \mathbb{P}^2$. Clearly,
\[
\rho(Q, h(H,\mathring{p}_i)) = \frac{Q^{-1} \frac{H^{-1} \mathring p_i}{|H^{-1} \mathring{p}_i|}}{\big|Q^{-1} \frac{H^{-1} p_i}{|H^{-1} \mathring{p}_i|}\big|}=h(\phi(Q,H),\mathring{p}_i),
\]
and
\[
\begin{array}{lcl}
d\phi_Q(H)[F(H,U)] &=& HU Q = (HQ) (Q^{-1} UQ) \\
&=& F(\phi(Q,H), \psi(Q, U)).
\end{array}
\]
Thus, the kinematics are {\em right equivariant} (see \cite{MahonyNOLCOS13}).

\subsection{Observer design}
Let $\hat H \in \mathrm{SL(3)}$ denote the estimate of $H$. Define the right group error $E = \hat H H^{-1} \in \mathrm{SL(3)}$
and the output errors $e_i \in \mathbb{P}^2$:
\begin{equation}\label{:eq:outputErrors}
e_i := \rho(\hat H^{-1}, p_i) = \frac{\hat H p_i}{|\hat H p_i|} =  \frac{E \mathring{p}_i}{|E \mathring{p}_i|}.
\end{equation}
Inspired by \cite{MahonyNOLCOS13}, the proposed observer takes the form
\begin{equation}\label{:eq:ObserverGeneral}
\dot {\hat H} = \hat H U - \Delta(\hat H, p) \hat H,
\end{equation}
where $\Delta(\hat H, p) \in \mathfrak{sl}(3)$ is the innovation term to be designed and must be {\em right equivariant} in the sense that for all $Q \in \mathrm{SL(3)}$:
\[
\Delta(\hat H Q, \rho(Q, p_1), \cdots, \rho(Q, p_n)) = \Delta(\hat H, p_1, \cdots,p_n).
\]
Interestingly, if the innovation term $\Delta$ is equivariant, the dynamics of the canonical error $E$ is autonomous and given by \cite[Th. 1]{MahonyNOLCOS13}:
\begin{equation}\label{:eq:ErrorObserverGeneral}
\dot E = -\Delta(E, \mathring{p}) E.
\end{equation}

In order to determine $\Delta(\hat H,p)$, we need a {\em non-degenerate right-invariant} cost function. To this purpose, let us first define
individual {\em degenerate right-invariant costs} at $\mathring{p}_i$ on the output space $\mathbb{P}^2$ as follows:
\begin{align*}
  \mathcal{C}_{\mathring{p}_i}^i \colon &\mathrm{SL(3)} \times \mathbb{P}^2 \longrightarrow \mathbb{R}^+,\\
  &\quad\,\,(\hat H,p_i) \,\,\,\mapsto \,{\mathcal{C}}_{\mathring{p}_i}^i(\hat H,p_i) := \frac{k_i}{2}\left|\frac{\hat H p_i}{|\hat H p_i|} - \mathring{p}_i\right|^2
\end{align*}
with $k_i$ positive numbers. One verifies that ${\mathcal{C}}_{\mathring{p}_i}^i(\hat H,p_i)$ are right-invariant in the sense that ${\mathcal{C}}_{\mathring{p}_i}^i (\hat H Q, \rho(Q, p_i)) = {\mathcal{C}}_{\mathring{p}_i}^i(\hat H, p_i)$ for all $Q \in \mathrm{SL(3)}$. The costs ${\mathcal{C}}_{\mathring{p}_i}^i(\hat H,p_i)$ are degenerate since by taking $p_i=\mathring{p}_i$ there exists an infinity of $\hat H$ such that ${\mathcal{C}}_{\mathring{p}_i}^i(\hat H,\mathring p_i) = 0$.

Then, the aggregate cost is defined as the sum of all the individual costs as follows:
\begin{equation}\label{:eq:aggregateCost}
\begin{split}
  \!\!\!\!{\mathcal{C}}_{\mathring{p}} \colon &\mathrm{SL(3)} \!\times\! (\mathbb{P}^2 \times \cdots \times \mathbb{P}^2) \!\longrightarrow \mathbb{R}^+,\\
  &\quad\quad\,(\hat H,p)\,\,\,\,\mapsto {\mathcal{C}}_{\mathring{p}}(\hat H,p) :=  \!\sum_{i=1}^n\frac{k_i}{2}\left|\frac{\hat H p_i}{|\hat H p_i|} - \mathring{p}_i\right|^2
\end{split}
\end{equation}
It is straightforward that ${\mathcal{C}}_{\mathring{p}}(\hat H,p)$ is right-invariant.
According to \cite[Lem. 3]{MahonyNOLCOS13}, the aggregate cost is {\em non-degenerate} if
\[\bigcap_{i=1}^n \mathrm{stab}_{\rho}(\mathring{p}_i) = \{I\}\]
where the stabilizer $\mathrm{stab}_{\rho}(p)$ of an element $p\in \mathbb{P}^2$ is defined by
$\mathrm{stab}_{\rho}(p) = \{Q \in \mathrm{SL(3)} \mid \rho(Q,p) = p\}$. In fact, $\bigcap_{i=1}^n \mathrm{stab}_{\rho}(\mathring{p}_i) = \{I\}$ is equivalent to $\bigcap_{i=1}^n \mathfrak{s}_i = \{0\}$, where $\mathfrak{s}_i = \mathrm{ker} (\mathrm{d}\rho_{\mathring{p}_i}(I))$ is the Lie-algebra associated with $\mathrm{stab}_{\rho}(\mathring{p}_i)$ (see \cite{MahonyNOLCOS13}).

\begin{lemma}\label{:lem:aggregateCost}
Assume that the set ${\mathcal M}_n$ of the measured directions $\mathring{p}_i$ is consistent. Then, the aggregate cost ${\mathcal{C}}_{\mathring{p}}(\hat H,p)$ defined by \eqref{:eq:aggregateCost} is non-degenerate. As a consequence, $(I,\mathring{p})$ is a global minimum of the aggregate cost ${\mathcal{C}}_{\mathring{p}}$.
\end{lemma}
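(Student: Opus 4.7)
The plan is to invoke the criterion quoted just before the lemma statement: non-degeneracy of the aggregate cost is equivalent to $\bigcap_{i=1}^n \mathrm{stab}_\rho(\mathring{p}_i)=\{I\}$. So the whole problem reduces to a linear-algebra computation of this intersection under the consistency hypothesis. I will work with the four directions in ${\mathcal M}_4$ only, since enlarging the intersection can only shrink it.

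First I would unpack the stabilizer. By definition, $Q\in\mathrm{stab}_\rho(\mathring{p}_i)$ iff $Q^{-1}\mathring{p}_i/|Q^{-1}\mathring{p}_i|=\mathring{p}_i$ in $\mathbb{P}^2$, i.e.\ iff $\mathring{p}_i$ is an eigenvector of $Q^{-1}$ (equivalently of $Q$) with a nonzero real eigenvalue. Thus, for $Q$ in the intersection, each $\mathring{p}_i\in{\mathcal M}_4$ is an eigenvector of $Q$, say $Q\mathring{p}_i=\lambda_i\mathring{p}_i$. Since by consistency the triplet $\{\mathring{p}_1,\mathring{p}_2,\mathring{p}_3\}$ is linearly independent, it is a basis of $\mathbb{R}^3$ and $Q$ is simultaneously diagonalized in this basis with diagonal $(\lambda_1,\lambda_2,\lambda_3)$.

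The fourth vector gives the key constraint. Write $\mathring{p}_4=b_1\mathring{p}_1+b_2\mathring{p}_2+b_3\mathring{p}_3$ with $b_j\neq 0$ for all $j$, which is precisely the consistency statement recalled in the paper. Applying $Q$ yields
\[
Q\mathring{p}_4 = b_1\lambda_1\mathring{p}_1+b_2\lambda_2\mathring{p}_2+b_3\lambda_3\mathring{p}_3
              = \lambda_4(b_1\mathring{p}_1+b_2\lambda_2\mathring{p}_2+b_3\mathring{p}_3).
\]
Comparing coefficients in the basis $\{\mathring{p}_1,\mathring{p}_2,\mathring{p}_3\}$ and using $b_j\neq 0$ forces $\lambda_1=\lambda_2=\lambda_3=\lambda_4=:\lambda$. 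Hence $Q=\lambda I$, and $\det Q=\lambda^3=1$ together with $\lambda\in\mathbb{R}$ gives $\lambda=1$, i.e.\ $Q=I$. This proves $\bigcap_{i=1}^n\mathrm{stab}_\rho(\mathring{p}_i)\subseteq\bigcap_{\mathring{p}_i\in{\mathcal M}_4}\mathrm{stab}_\rho(\mathring{p}_i)=\{I\}$ and the reverse inclusion is trivial, so by the cited lemma from \cite{MahonyNOLCOS13} the aggregate cost is non-degenerate.

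For the second assertion, observe that each summand of ${\mathcal C}_{\mathring{p}}(\hat H,p)$ is manifestly nonnegative, so ${\mathcal C}_{\mathring{p}}\geq 0$. Evaluating at $(\hat H,p)=(I,\mathring{p})$ gives $\hat H p_i/|\hat H p_i|=\mathring{p}_i$ and hence ${\mathcal C}_{\mathring{p}}(I,\mathring{p})=0$, so $(I,\mathring{p})$ attains the global minimum. The only point requiring care is the sign issue hidden in the projective normalisation $p\mapsto p/|p|$: the stabilizer condition really demands a positive proportionality constant, but this only restricts $Q$ further and does not alter the conclusion $Q=I$. I expect this sign bookkeeping and the clean identification $\mathrm{stab}_\rho(p)=\{Q\in\mathrm{SL}(3):Qp\parallel p\}$ to be the main, though minor, subtlety; the rest is a short eigenvector argument using the consistency hypothesis exactly as stated.
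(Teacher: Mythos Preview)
Your proof is correct, and the central step --- each $\mathring{p}_i$ must be an eigenvector, and consistency of ${\mathcal M}_4$ forces all eigenvalues to coincide --- is exactly the argument the paper uses. The one genuine difference is the level at which you work. The paper verifies the infinitesimal criterion $\bigcap_i \mathfrak{s}_i=\{0\}$: it computes $\mathrm{d}\rho_{\mathring{p}_i}(I)$ explicitly, finds that $U\in\mathfrak{s}_i$ iff $\pi_{\mathring{p}_i}U\mathring{p}_i=0$, i.e.\ $\mathring{p}_i$ is an eigenvector of $U\in\mathfrak{sl}(3)$, runs the same consistency argument to get $U=\lambda I$, and concludes from $\mathrm{tr}(U)=0$ that $U=0$. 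You instead verify the group-level criterion $\bigcap_i\mathrm{stab}_\rho(\mathring{p}_i)=\{I\}$ directly and close with $\det Q=\lambda^3=1\Rightarrow\lambda=1$. Your route is marginally more elementary since it skips the differential computation; the paper's Lie-algebra version has the small advantage that the eigenvalue relation $U\mathring{p}_i=\lambda_i\mathring{p}_i$ arises with no projective sign ambiguity to track, the point you (correctly) flag at the end. Note also a typo in your display: the stray $\lambda_2$ in $\lambda_4(b_1\mathring{p}_1+b_2\lambda_2\mathring{p}_2+b_3\mathring{p}_3)$ should be removed.
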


\begin{proof}
One computes the derivative
\begin{equation} \label{:eq:drho}
\begin{array}{lcl}
\mathrm{d}\rho_{\mathring{p}_i}(H)[HU] &\!\!\!\!=\!\!\!\!& \mathrm{d} \left(\frac{H^{-1}\mathring{p}_i}{|H^{-1}\mathring{p}_i|}\right) [HU]\\
&\!\!\!\!=\!\!\!\!& \left(I - \frac{(H^{-1}\mathring{p}_i)(H^{-1}\mathring{p}_i)^\top}{|H^{-1}\mathring{p}_i|^2} \right) \frac{UH^{-1}\mathring{p}_i}{|H^{-1}\mathring{p}_i|}
\end{array}
\end{equation}
with some $U \in \mathfrak{sl}(3)$. From \eqref{:eq:drho}, one deduces that
\[
\mathfrak{s}_i = \mathrm{ker}(\mathrm{d}\rho_{\mathring{p}_i}(I)) = \{ U \in \mathfrak{sl}(3) \mid \pi_{\mathring{p}_i} U\mathring{p}_i =0\}
\]
with $\pi_x := (I - x x^\top)$ for all $x\in \mathbb{S}^2$. Thus,
\[
\bigcap_{i=1}^n \mathfrak{s}_i = \{ U \in \mathfrak{sl}(3) \mid \pi_{\mathring{p}_i} U\mathring{p}_i =0, \forall i = 1,\cdots,n \}
\]
Now, let us determine $U \in \mathfrak{sl}(3)$ such that $\pi_{\mathring{p}_i} U\mathring{p}_i =0$, for all $i = 1,\cdots,n$. The relation $\pi_{\mathring{p}_i} U\mathring{p}_i =0$ can be equivalently written as
\[
U \mathring{p}_i = \lambda_i  \mathring{p}_i,
\]
with $\lambda_i :=(\mathring{p}_i^\top U  \mathring{p}_i)$. From here, one deduces that $\lambda_i$ are eigenvalues of $U$ and $\mathring p_i$ are the associated eigenvectors. Since the set ${\mathcal M}_n$ of the measured directions $\mathring{p}_i$ is consistent, without loss of generality we assume that the subset ${\mathcal M}_4 = \{\mathring{p}_1,\mathring{p}_2,\mathring{p}_3,\mathring{p}_4\}$ is consistent. Thus, there exist 3 non-null numbers $b_1$, $b_2$, and $b_3$ such that
$\mathring{p}_4 = \sum_{i=1}^3 b_i \mathring{p}_i$. From here,
using the fact that
\[
U \mathring p_4 = U \sum_{i=1}^3 b_i \mathring{p}_i = \sum_{i=1}^3 b_i U \mathring{p}_i= \sum_{i=1}^3 b_i \lambda_i \mathring p_i
\]
and
\[
U \mathring p_4 = \lambda_4 \mathring p_4 = \lambda_4 \sum_{i=1}^3 b_i \mathring p_i  = \sum_{i=1}^3 b_i \lambda_4\mathring p_i
\]
one deduces
\begin{equation}\label{:eq:Udeduction1}
\sum_{i=1}^3 b_i \lambda_i \mathring p_i = \sum_{i=1}^3 b_i \lambda_4 \mathring p_i.
\end{equation}
Since $b_i$ (with $i=1,2,3$) are non-null and the 3 unit vectors $\mathring{p}_i$ (with $i=1,2,3$) are linearly independent, \eqref{:eq:Udeduction1} directly yields $\lambda_1 = \lambda_2 = \lambda_3 = \lambda_4$. Let $\lambda$ denote the value of these four identical eigenvalues.

From here, one easily deduces that the {\em geometric multiplicity} of the eigenvalue $\lambda$ (defined as the dimension of the eigenspace associated with $\lambda$) is equal to 3, since the 3 eigenvectors $\mathring{p}_i$ (with $i=1,2,3$) associated with $\lambda$ are linearly independent. Since the {\em algebraic multiplicity} of the eigenvalue $\lambda$ is no less than the corresponding {\em geometric multiplicity}, one deduces that it is also equal to 3. This means that $U$ has a triple eigenvalue $\lambda$. Since the number of linearly independent eigenvectors of $U$ is equal to 3, the matrix $U$ is diagonalizable. Then, the diagonalizability of $U$ along with the fact that it has a triple eigenvalue implies that $U = \lambda I$. This in turn yields $\mathrm{tr}(U) = 3 \lambda$, which is null since $U$ is an element of $\mathrm{sl}(3)$. Consequently, $\lambda = 0$ and $U= 0$. One then deduces $\bigcap_{i=1}^n \mathfrak{s}_i = \{0\}$ which concludes the proof.
\end{proof}

\vspace{0.2cm}
Now that the non-degenerate right-invariant cost function ${\mathcal{C}}_{\mathring{p}}(\hat H,p)$ is defined, we compute the innovation term $\Delta(\hat H,p)$ as \cite[Eq. (40)]{MahonyNOLCOS13}
\begin{equation}\label{:eq:innoGradGeneral}
\Delta(\hat H,p) := (\mathrm{grad}_1 {\mathcal{C}}_{\mathring{p}}(\hat H,p) )\hat H^{-1},
\end{equation}
where $\mathrm{grad}_1$ is the {\em gradient} in the first variable, using a {\em right-invariant Riemannian metric} on $\mathrm{SL}(3)$. Let $\langle\cdot,\cdot\rangle: \mathfrak{sl}(3)\times \mathfrak{sl}(3) \longrightarrow \mathbb{R}$ be a positive definite inner product on $\mathfrak{sl}(3)$, chosen to be the Euclidean matrix inner product on $\mathbb{R}^{3\times 3}$. Then, a right-invariant Riemannian metric on $\mathrm{SL}(3)$ induced by the inner product $\langle\cdot,\cdot\rangle$ is defined by
\[
\langle U_1H, U_2 H\rangle_H := \langle U_1,U_2\rangle.
\]

\begin{lemma}\label{:lem:innovation}
The innovation term $\Delta(\hat H,p)$ defined by \eqref{:eq:innoGradGeneral} is right equivariant and explicitly given by
\begin{equation}\label{:eq:innoGrad}
\Delta(\hat H,p) =  -\sum_{i=1}^{n} k_i \pi_{e_i}\mathring{p}_ie_i^\top ,\,\,\, \mathrm{with} \,\,\, e_i = \frac{\hat H p_i}{|\hat H p_i|}.
\end{equation}
\end{lemma}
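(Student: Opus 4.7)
The plan is to split the lemma into its two assertions (explicit formula, and right equivariance), derive the formula by a direct gradient computation, and then obtain equivariance either as a corollary of the formula or by appealing to the general construction of \cite{MahonyNOLCOS13}.

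First I would compute the directional derivative of the aggregate cost. Differentiating ${\mathcal C}_{\mathring p}$ in direction $U \hat H$ (with $U\in\mathfrak{sl}(3)$), and using the projective derivative formula
\[
\mathrm{d}\Big(\tfrac{\hat H p_i}{|\hat H p_i|}\Big)[U\hat H] = \pi_{e_i}\, U e_i,
\]
which is the analogue of \eqref{:eq:drho}, together with $\pi_{e_i}e_i=0$ and $\pi_{e_i}^\top=\pi_{e_i}$, I obtain
\[
\mathrm{d}_1 {\mathcal C}_{\mathring p}(\hat H,p)[U\hat H]
 \;=\; -\sum_{i=1}^n k_i\, \mathring p_i^{\,\top} \pi_{e_i}\, U e_i
 \;=\; -\sum_{i=1}^n k_i\, \mathrm{tr}\bigl(e_i \mathring p_i^{\,\top}\pi_{e_i}\, U\bigr),
\]
the last equality by cyclicity of trace.

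Next I would match this against the right-invariant metric. Writing $\mathrm{grad}_1 {\mathcal C}_{\mathring p}(\hat H,p) = G\hat H$ for a unique $G\in\mathfrak{sl}(3)$, the defining identity is $\mathrm{tr}(G^\top U) = \mathrm{d}_1 {\mathcal C}_{\mathring p}[U\hat H]$ for every traceless $U$. Since $\mathrm{tr}(A U) = \mathrm{tr}(B U)$ on all of $\mathfrak{sl}(3)$ iff $A-B$ is a multiple of $I$, I can read off $G = -\sum_i k_i \pi_{e_i}\mathring p_i e_i^\top + \alpha I$. The key (small) check is that the leading term already has zero trace: $\mathrm{tr}(\pi_{e_i}\mathring p_i e_i^\top)= e_i^\top\pi_{e_i}\mathring p_i = 0$ because $e_i$ is a unit vector and lies in the kernel of $\pi_{e_i}$. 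Hence $\alpha=0$, and by \eqref{:eq:innoGradGeneral} we get $\Delta(\hat H,p) = G\hat H\hat H^{-1} = -\sum_{i=1}^n k_i \pi_{e_i}\mathring p_i e_i^\top$.

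Finally, for right equivariance I would observe directly from the explicit formula that $e_i$ is invariant under the joint action $(\hat H,p_i)\mapsto (\hat H Q,\rho(Q,p_i))$: indeed
\[
\frac{\hat H Q\,\rho(Q,p_i)}{|\hat H Q\,\rho(Q,p_i)|}
 = \frac{\hat H p_i}{|\hat H p_i|} = e_i,
\]
since the scalar $|Q^{-1}p_i|$ cancels. Each term $\pi_{e_i}\mathring p_i e_i^\top$ is therefore invariant, yielding the required identity $\Delta(\hat H Q,\rho(Q,p_1),\ldots,\rho(Q,p_n)) = \Delta(\hat H,p)$. (Alternatively, this is automatic from \cite{MahonyNOLCOS13}: the gradient of a right-invariant cost under a right-invariant metric produces a right-equivariant innovation.) The only real obstacle I anticipate is bookkeeping with the trace/adjoint identifications needed to convert the directional derivative into a metric gradient while staying inside $\mathfrak{sl}(3)$; the substantive content is the observation that the would-be scalar-trace correction vanishes on its own.
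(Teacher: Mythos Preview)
Your proposal is correct and follows essentially the same route as the paper: compute the directional derivative of the aggregate cost in a direction $U\hat H$, rewrite it as a trace/inner product, and match against the right-invariant metric to read off $\Delta$. The paper cites \cite{MahonyNOLCOS13} for equivariance, while you additionally supply the direct check that $e_i$ is invariant under $(\hat H,p_i)\mapsto(\hat H Q,\rho(Q,p_i))$; you are also more explicit than the paper in verifying that the candidate expression is actually traceless (so that no $\alpha I$ correction is needed), a point the paper leaves implicit.
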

\begin{proof}
The proof for $\Delta(\hat H,p)$ to be equivariant is a direct result of \cite{MahonyNOLCOS13}. Now, using standard rules for transformations of Riemannian gradients and the fact that the Riemannian metric is right-invariant, one obtains
\begin{equation}\label{:eq:GradDerivation1}
\begin{split}
\mathcal{D}_1{\mathcal{C}}_{\mathring{p}}(\hat H,p) [U \hat H]&=\langle\mathrm{grad}_1 {\mathcal{C}}_{\mathring{p}}(\hat H,p), U \hat H \rangle_H \\
&= \langle \mathrm{grad}_1 {\mathcal{C}}_{\mathring{p}}(\hat H,p) \hat H^{-1} \hat H, U\hat H \rangle_H\\
&= \langle \mathrm{grad}_1 {\mathcal{C}}_{\mathring{p}}(\hat H,p) \hat H^{-1} , U \rangle\\
&= \langle \Delta(\hat H,p)  , U \rangle,
\end{split}
\end{equation}
with some $U \in \mathfrak{sl}(3)$. Besides, in view of \eqref{:eq:aggregateCost} one has
\begin{equation}\label{:eq:GradDerivation2}
\begin{array}{ll}
&\!\!\!\!\!\!\mathcal{D}_1{\mathcal{C}}_{\mathring{p}}(\hat H,p) [U \hat H] = d_{\hat H}{\mathcal{C}}_{\mathring{p}}(\hat H,p)[U \hat H] \\[1ex]
&\,\, = \sum_{i=1}^n k_i \left(\frac{\hat H p_i}{|\hat H p_i|} - \mathring p_i\right)^{\!\!\top} \!\!
\left(I - \frac{(\hat H p_i)(\hat H p_i)^\top}{|\hat H p_i|^2}\right) \frac{(U\hat H)p_i}{|\hat H p_i|}\\[1ex]
&\,\, = \sum_{i=1}^n k_i(e_i -\mathring p_i)^\top (I -e_ie_i^\top) U e_i\\[1ex]
&\,\, = \mathrm{tr}\left(\sum_{i=1}^n k_i e_i (e_i -\mathring p_i)^\top (I -e_ie_i^\top) U \right) \\[1ex]
&\,\, = -\mathrm{tr}\left(\sum_{i=1}^n k_i e_i \mathring p_i^\top \pi_{e_i} U \right) \\[1ex]
&\,\, = \left \langle  -\sum_{i=1}^n k_i\pi_{e_i}\mathring p_i e_i^\top, U \right\rangle .
\end{array}
\end{equation}
Finally, the explicit expression of $\Delta(\hat H,p)$ given by \eqref{:eq:innoGrad} is directly obtained from \eqref{:eq:GradDerivation1} and  \eqref{:eq:GradDerivation2}.
\end{proof}

\vspace{0.2cm}
One deduces from \eqref{:eq:innoGrad} that
\[
\Delta(E,\mathring p) =  -\sum_{i=1}^{n} k_i \pi_{e_i}\mathring{p}_ie_i^\top = \Delta(\hat H,p),\,\,\, \mathrm{with} \,\,\, e_i = \frac{E \mathring p_i}{|E \mathring p_i|},
\]
and, consequently, from \eqref{:eq:ErrorObserverGeneral} that
\begin{equation}\label{:eq:ErrorObserverDyn}
\dot E = \left(\sum_{i=1}^{n} k_i \pi_{e_i}\mathring{p}_ie_i^\top \right) E = - \mathrm{grad}_1  {\mathcal{C}}_{\mathring{p}}(E,\mathring p).
\end{equation}

\begin{theorem}\label{theo:1}
Consider the kinematics \eqref{:eq:dotH} and assume that the velocity group $U \in \mathfrak{sl}(3)$ is known. Consider the nonlinear observer defined by \eqref{:eq:ObserverGeneral}, with the innovation term $\Delta(\hat H,p) \in \mathfrak{sl}(3)$ defined by \eqref{:eq:innoGrad}. Then, if the set ${\cal M}_n$ of the measured directions $\mathring{p}_i$ is consistent, there exists a basin of attraction ${\mathcal B} \subseteq \mathrm{SL}(3)$ containing $I$ such that for any initial conditions $H(0)$ and $\hat H(0) \in \mathrm{SL}(3)$ such that $E(0) \in \mathcal{B}$ then $E(t) \rightarrow I$ and $\hat H(t) \rightarrow H(t)$.
\end{theorem}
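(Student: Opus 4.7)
The plan is to use the aggregate cost itself, $V(E) := \mathcal{C}_{\mathring{p}}(E,\mathring{p})$, as a Lyapunov function on $\mathrm{SL}(3)$. By construction, the error dynamics \eqref{:eq:ErrorObserverDyn} is the negative Riemannian gradient flow of $V$ with respect to the right-invariant metric, so that
\begin{equation*}
\dot V = \langle \mathrm{grad}_1 \mathcal{C}_{\mathring{p}}(E,\mathring{p}), \dot E \rangle_E = -\|\mathrm{grad}_1 \mathcal{C}_{\mathring{p}}(E,\mathring{p})\|_E^2 \leq 0,
\end{equation*}
with equality exactly at the critical points of $V$. Since $V$ is smooth and non-negative with $V(I)=0$, $I$ is already a global minimum; the task is to promote this to local asymptotic stability.

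To that end, I would compute the Hessian of $V$ at $I$ and show that it is positive definite. Differentiating along a curve $E(t) = \exp(tU)$ with $U\in\mathfrak{sl}(3)$, formula \eqref{:eq:drho} gives $\mathrm{d}\rho_{\mathring p_i}(I)[U] = -\pi_{\mathring p_i} U \mathring p_i$, so that the quadratic term in the expansion of $V$ becomes
\begin{equation*}
\mathrm{Hess}\,V(I)[U,U] = \sum_{i=1}^n k_i \,|\pi_{\mathring p_i} U \mathring p_i|^2.
\end{equation*}
This vanishes precisely on $\bigcap_i \mathfrak{s}_i$, which is reduced to $\{0\}$ under the consistency hypothesis by the very algebraic argument already carried out in the proof of Lemma~\ref{:lem:aggregateCost}. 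Hence $I$ is a strict local minimum and an isolated critical point of $V$.

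I would then close the argument with a LaSalle-type step. Using continuity and the strict-minimum property, I can choose $\epsilon>0$ small enough to obtain a compact neighbourhood $\mathcal{U}$ of $I$ on which $V>0$ away from $I$ and $V\geq\epsilon$ on $\partial\mathcal{U}$. The connected component of $\{E\in\mathrm{SL}(3) : V(E)<\epsilon\}$ containing $I$, which I take as the basin $\mathcal{B}$, has compact closure inside $\mathcal{U}$ and is positively invariant because $\dot V\leq 0$. LaSalle's invariance principle applied to the autonomous error flow on $\overline{\mathcal{B}}$ then forces trajectories into the largest invariant subset of $\{\mathrm{grad}_1\mathcal{C}_{\mathring{p}}(E,\mathring p)=0\}\cap\overline{\mathcal{B}}$, which by the isolated-critical-point property coincides with $\{I\}$. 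This yields $E(t)\to I$, and equivalently $\hat H(t)\to H(t)$, for every $E(0)\in\mathcal{B}$. The main obstacle I expect is the construction of $\mathcal{B}$ itself: the cost $V$ may well possess spurious critical points scattered throughout the non-compact manifold $\mathrm{SL}(3)$, so $\epsilon$ must be shrunk enough to isolate $I$, which is precisely why the theorem can only promise a local basin of attraction rather than semi-global or global convergence.
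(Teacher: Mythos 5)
Your proposal is correct and follows the same overall skeleton as the paper's proof: the Lyapunov function is the aggregate cost ${\mathcal L}_0={\mathcal{C}}_{\mathring{p}}(E,\mathring p)$, its derivative along the gradient flow \eqref{:eq:ErrorObserverDyn} is $-\|\Delta(E,\mathring p)\|^2$, and LaSalle closes the argument. Where you genuinely diverge is in the treatment of the limit set. The paper characterises the zero set of the innovation \emph{globally}: it computes $\tr\bigl(\Delta(E,\mathring p)E^{-\top}\bigr)$ as a sum of terms of the form $|E\mathring p_i|^2|\mathring p_i|^2-((E\mathring p_i)^\top\mathring p_i)^2$, each nonnegative by Cauchy--Schwarz, so $\Delta=0$ forces $E\mathring p_i=\lambda_i\mathring p_i$ for every $i$; the eigenvector argument of Lemma~\ref{:lem:aggregateCost} together with $\det E=1$ then gives $E=I$ as the \emph{only} equilibrium of the error flow on all of $\mathrm{SL}(3)$. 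You instead compute the Hessian of the cost at $I$, show it is positive definite because its kernel is $\bigcap_i\mathfrak{s}_i=\{0\}$ under consistency, and deduce only that $I$ is an isolated nondegenerate critical point before localising to a sublevel set. Each route buys something: the paper's trace argument rules out spurious equilibria everywhere (so the obstruction to a larger basin is purely the possible non-compactness of sublevel sets of ${\mathcal L}_0$ on the non-compact group, cf.\ Remark~\ref{rem:domainstab}), whereas your Hessian-plus-sublevel-set construction makes explicit two points the paper glosses over with ``one can ensure that ${\mathcal L}_0$ is locally a positive definite function of $E$'' and ``$E$ is locally bounded'', namely the local positive definiteness of the Lyapunov function and the compact positively invariant set needed to legitimately invoke LaSalle. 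Your final caveat about shrinking $\epsilon$ to isolate $I$ is, in light of the paper's global critical-point analysis, not actually necessary for excluding other critical points (there are none), though it remains necessary to guarantee compactness of $\overline{\mathcal B}$.
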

\begin{proof}
This theorem can be seen as a direct result of Theorem 2 in \cite{MahonyNOLCOS13}, but it can also be proved using classical Lyapunov theory. The candidate Lyapunov function under consideration is ${\mathcal L}_0 := {\mathcal{C}}_{\mathring{p}}(E, \mathring p)$. Using the consistency of the set ${\cal M}_n$, one can ensure that ${\mathcal L}_0$ is locally a positive definite function of $E$. The time-derivative of ${\mathcal L}_0$ along the error flow \eqref{:eq:ErrorObserverDyn} verifies
\[
\begin{array}{lcl}
\dot {\mathcal L}_0 &=& \frac{\mathrm{d}}{\mathrm{d}t} \sum_{i=1}^n \frac{k_i}{2} \left| \frac{E \mathring p_i}{|E \mathring p_i|} - \mathring p_i\right|^2 \\
&=& \sum_{i=1}^n k_i ( \frac{E \mathring p_i}{|E \mathring p_i|} - \mathring p_i)^\top(I - \frac{(E \mathring p_i)(E \mathring p_i)^\top}{|E \mathring p_i|^2}) \frac{\dot E \mathring p_i}{|E \mathring p_i|} \\
&=& -\sum_{i=1}^n k_i(e_i - \mathring p_i)^\top \pi_{e_i} \Delta(E,\mathring p) e_i \\
&=& \mathrm{tr}(\sum_{i=1}^n k_i e_i \mathring p_i^\top \pi_{e_i} \Delta(E,\mathring p))\\
&=& - ||\Delta(E,\mathring p)||^2 ,
\end{array}
\]
with $||\cdot||$ denoting the Frobenius norm defined by $||A||= \sqrt{\langle A, A \rangle}$ for any real valued square matrix $A$.
From here, one ensures that $E$ is locally bounded. Moreover, by application of LaSalle's theorem, one deduces that $\Delta(E,\mathring p)$ converges to zero.
From the definitions of $\Delta(\cdot,\cdot)$ \eqref{:eq:innoGrad} and $e_i$ \eqref{:eq:outputErrors}, one deduces that
\[
\Delta(E,\mathring p) {E}^{-\top} = \sum_{i=1}^n \left(I - \frac{E \mathring p_i {\mathring p_i}^\top E^\top}{|E {\mathring p_i}|^2}\right)\frac{{\mathring p_i} {\mathring p_i}^\top}{|E {\mathring p_i}|}.
\]
Computing the trace of $\Delta(E,\mathring p) {E}^{-\top}$, it follows:
\[
\text{tr}(\Delta(E,\mathring p) {E}^{-\top}) = \sum_{i=1}^n \frac{1}{|E {\mathring p_i}|^3}\left(|E {\mathring p_i}|^2 |{\mathring p_i}|^2 - ((E {\mathring p_i})^\top{\mathring p_i})^2\right).
\]
Define $X_i=E {\mathring p_i}$ and $Y_i=\mathring p_i$, and it is straightforward to verify that
\[
\text{tr}(\Delta(E,\mathring p) E^{-\top}) =  \sum_{i=1}^n \frac{1}{|X_i|^3}\left(|X_i|^2 |Y_i|^2 - (X_i^\top Y_i)^2\right) \geq 0
\]
Using the fact that $\Delta=0$ at the equilibrium and therefore $\text{tr}(\Delta(E,\mathring p) {E}^{-\top})=0$, as well as the
Cauchy-Schwarz inequality, it follows that $X^\top_i Y_i = \pm |X_i||Y_i|$ and consequently one has:
\[
(E {\mathring p_i})^\top{\mathring p_i} = \pm |E {\mathring p_i}||\mathring p_i|,\; \forall i=\{1, \cdots, n\},
\]
which in turn implies the existence of some scalars $\lambda_i= \pm |E {\mathring p_i}|$ such that
\begin{equation} \label{Htpi}
E {\mathring p_i} = \lambda_i {\mathring p_i}.
\end{equation}
Using \eqref{Htpi} and exploiting the consistency of the set ${\cal M}_n$, one can proceed analogously to the proof of Lemma \ref{:lem:aggregateCost} to deduce that $E$ has a triple eigenvalue $\lambda$ and $E = \lambda I$. Then, evoking the fact that $\det(E)=1$, one deduces that $\lambda =1$ and $E=I$. Consequently, $E$ converges asymptotically to the identity $I$.
\end{proof}

\begin{remark}\label{rem:domainstab}
The boundaries of the stability domain associated with Theorem~\ref{theo:1} are extremely difficult, and probably impossible, to analytically characterise.
The nature of the Lyapunov function ${\mathcal{L}}_0$ is always well conditioned in the neighbourhood of the correct homography matrix, but the global geometry of $\mathrm{SL}(3)$ is complex and there will be critical points and non-convex cost structure when the rotation component of the homography matrix has errors approaching $\pi\,$rads from the true homography. The authors believe, based on extensive simulation studies (cf.~\S\ref{section:experiment}) and our intuition for such problems, that the stability domain is very large in practice, encompassing all realistic scenarios where the camera actually observes the desired scene (rotations of less than $\pi/2\,$rads and moderate displacements).
\end{remark}

\section{Application to robotic systems} \label{sec:application}
\subsection{Homography kinematics from a camera moving with rigid-body motion}
\label{sec:BFF_velocity}
In this section, we consider the case where a sequence of homographies is generated by a moving camera viewing a stationary planar surface. The goal is to
develop a nonlinear filter for the image homography sequence using the velocity associated with the rigid-body motion of the camera rather than the group
velocity of the homography sequence, as was assumed in Section \ref{sec:Observer}. In fact, any group velocity (infinitesimal variation of the homography) must be associated with an instantaneous variation in measurement of the \emph{current} image ${\cal A}$ and not with a variation in the \emph{reference} image ${\cal \mathring{A}}$.  This imposes constraints on two degrees of freedom in the homography velocity, namely those associated with
variation of the normal to the reference image, and leaves the remaining six degrees of freedom in the homography group velocity depending on the rigid-body velocities of the camera.

Denote the rigid-body angular velocity and linear velocity of $\{A\}$ with respect to $\{\mathring{A}\}$ expressed in $\{A\}$ by $\Omega$ and $V$, respectively. The rigid body kinematics of $(R, \xi)$ are given by:
\begin{align}
  \dot{R} & = R \Omega_\times \label{eq:RBkin_R}\\
  \dot{\xi} & = R V \label{eq:RBkin_xi}
\end{align}
where $\Omega_\times$ is the skew symmetric matrix associated with the vector cross-product, i.e. $\Omega_\times y =\Omega \times y$, for all $y \in \mathbb{R}^3$.

Recalling \eqref{eq:dB}, it is easily verified that:
\[
\dot{d} = -\eta^\top V, \quad\quad \frac{d}{dt} \mathring{d}= 0.
\]

This constraint on the variation of $\eta$ and $\mathring{d}$ is precisely the velocity constraint associated with the fact that the reference image is stationary.

Consider a camera attached to the moving frame ${\cal A}$ moving with kinematics \eqref{eq:RBkin_R} and \eqref{eq:RBkin_xi} viewing a stationary planar scene.   The group velocity $U \in \gothsl(3)$ induced by the rigid-body motion, and such that the dynamics of $H$ satisfies \eqref{:eq:dotH},
is given by \cite[Lem. 5.3]{2012_Mahony.IJC}
\begin{equation}\label{eq:dotH_final}
U = \left( \Omega_\times +\frac{V \eta^\top }{d} -\frac{\eta^\top V }{3 d}I  \right).
\end{equation}

Note that the group velocity $U$ induced by camera motion depends on the additional variables $\eta$ and $d$ that define the scene geometry at time $t$ as well as the scale factor $\gamma$. Since these variables are unmeasurable and cannot be extracted directly from the measurements, in the sequel, we rewrite:
\begin{equation}
U:= \left(\Omega_\times +\Gamma\right), \quad \mbox{ with } \Gamma=\left (\frac{V \eta^\top }{d} -\frac{\eta^\top V }{3 d}I  \right).
\label{eq:A}
\end{equation}

Since $\{\mathring{A}\}$ is stationary by assumption, the vector $\Omega$ can be directly obtained from the set of embedded gyroscopes. The term $\Gamma$ is related to the translational motion expressed in the current frame $\{A\}$. If we assume that $\frac{\dot{\xi}}{d}$ is constant ({\em e.g.} the situation in which the camera moves with a constant velocity parallel to the scene or converges exponentially toward it), and using the fact that $V=R^\top\dot{\xi}$, it is straightforward to verify that
\begin{equation}
\dot{\Gamma}=[\Gamma, \Omega_\times], \quad \mbox{ with } \Gamma=\left (\frac{V \eta^\top }{d} -\frac{\eta^\top V }{3 d}I  \right),
\label{Gamma}
\end{equation}
where $[\Gamma, \Omega_\times]= \Gamma\Omega_\times -\Omega_\times \Gamma$ is the Lie bracket.

However, if we assume that $\frac{V}{d}$ is constant (the situation in which the camera follows a circular trajectory over the scene or performs an exponential convergence towards it), it follows that
\begin{equation}
\dot{\Gamma}_1=\Gamma_1 \Omega_\times, \mbox{ with } \Gamma_1=\frac{V}{d}\eta^\top.
\label{Gamma_1}
\end{equation}

\subsection{Observer with partially known velocity of the rigid body}
In this section we assume that the group velocity $U$ in \eqref{eq:A} is not available, but the angular velocity $\Omega$ is. The goal is to provide an estimate $\hat{H} \in \mathrm{SL(3)}$ for $H \in \mathrm{SL(3)}$ to drive the group error $E\,(=\hat{H}H^{-1})$ to the identity matrix $I$ and the error term $\tilde{\Gamma}=\Gamma-\hat{\Gamma}$ (resp. $\tilde{\Gamma}_1=\Gamma_1-\hat{\Gamma}_1$) to $0$ if $\Gamma$ (resp. $\Gamma_1$) is constant or slowly time varying. The observer when $\frac{\dot{\xi}}{d}$ is constant is chosen as follows (compare to \eqref{:eq:ObserverGeneral}):
\begin{align}
\dot{\hat{H}} &= \hat{H}(\Omega_\times+\hat{\Gamma})-\Delta(\hat H,p)\hat{H}, \label{:eq:dot_hat_H}\\
\dot{\hat{\Gamma}} &= [\hat{\Gamma}, \Omega_\times] -k_I\Ad_{\hat{H}^{\top}} \Delta(\hat H,p)  .\label{:eq:dot_hat_Gamma}
\end{align}
and the observer when $\frac{V}{d}$ is constant is defined as follows:
\begin{align}
\dot{\hat{H}} &= \hat{H}(\Omega_\times+\hat{\Gamma}_1-\frac{1}{3} \tr(\hat{\Gamma}_1)I)-\Delta(\hat H,p)\hat{H}, \label{:eq:dot_hat_H-2}\\
\dot{\hat{\Gamma}}_1 &= \hat{\Gamma}_1\Omega_\times -k_I\Ad_{\hat{H}^{\top}}\Delta(\hat H,p) .\label{:eq:dot_hat_Gamma_1}
\end{align}
with some positive gain $k_I$ and $\Delta(\hat H,p)$ given by \eqref{:eq:innoGrad}.

\begin{proposition}\label{propo:1}
Consider a camera moving with kinematics \eqref{eq:RBkin_R} and \eqref{eq:RBkin_xi} viewing a planar scene. Assume that ${\cal \mathring{A}}$ is stationary and that the orientation velocity $\Omega \in \{A\}$ is measured and bounded. Let $H : {\cal A} \rightarrow {\cal \mathring{A}}$ denote the calibrated homography \eqref{eq:homog-decomp} and consider the kinematics \eqref{:eq:dotH} along with \eqref{eq:A}. Assume that $H$ is bounded and that $\Gamma$ (resp. $\Gamma_1$) is such that it obeys \eqref{Gamma} (resp. \eqref{Gamma_1}).

Consider the nonlinear observer defined by (\ref{:eq:dot_hat_H}--\ref{:eq:dot_hat_Gamma}), (resp. (\ref{:eq:dot_hat_H-2}--\ref{:eq:dot_hat_Gamma_1})) along with the innovation $\Delta(\hat H, p) \in \gothsl(3)$ given by \eqref{:eq:innoGrad}. Then, if the set ${\cal M}_n$ of the measured directions $\mathring{p}_i$ is consistent, the equilibrium $(E, \tilde{\Gamma})=(I, 0)$ (resp. $(E, \tilde{\Gamma}_1)=(I, 0)$) is locally asymptotically stable.
\end{proposition}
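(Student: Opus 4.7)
The plan is to combine the geometric error construction underlying Theorem~\ref{theo:1} with an additional quadratic penalty on the bias error, chosen so that the adjoint structure deliberately built into the bias innovation produces an exact cross-term cancellation in the Lyapunov derivative. First I would introduce the coupled error variables $E := \hat H H^{-1}$ and $\tilde\Gamma := \Gamma - \hat\Gamma$ and combine \eqref{:eq:dotH}, \eqref{eq:A} and \eqref{:eq:dot_hat_H}--\eqref{:eq:dot_hat_Gamma} to derive
\[
\dot E = -\bigl(\Delta(\hat H,p) + \Ad_{\hat H}\tilde\Gamma\bigr)E, \quad \dot{\tilde\Gamma} = [\tilde\Gamma,\Omega_\times] + k_I\,\Ad_{\hat H^\top}\Delta(\hat H,p).
\]
The second observer (\ref{:eq:dot_hat_H-2}--\ref{:eq:dot_hat_Gamma_1}) gives the analogous pair in which $\tilde\Gamma$ in the $E$-equation is replaced by the traceless projection $\tilde\Gamma_1-\tfrac{1}{3}\tr(\tilde\Gamma_1)I$ and the Lie bracket in the bias equation is replaced by $\tilde\Gamma_1\Omega_\times$.

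Next I would take the Lyapunov candidate
\[
\mathcal{L} := \mathcal{C}_{\mathring p}(E,\mathring p) + \frac{1}{2k_I}\|\tilde\Gamma\|^2,
\]
which is locally positive definite in $(E,\tilde\Gamma)$ at $(I,0)$ by Lemma~\ref{:lem:aggregateCost}. Using the gradient computation from Lemma~\ref{:lem:innovation}, the first term differentiates along the error flow to $-\|\Delta\|^2-\langle\Delta,\Ad_{\hat H}\tilde\Gamma\rangle$, while the bias term contributes $\tfrac{1}{k_I}\langle\tilde\Gamma,[\tilde\Gamma,\Omega_\times]\rangle + \langle\tilde\Gamma,\Ad_{\hat H^\top}\Delta\rangle$. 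The Lie-bracket scalar vanishes because $\tilde\Gamma^\top\tilde\Gamma$ and $\tilde\Gamma\tilde\Gamma^\top$ are symmetric while $\Omega_\times$ is skew-symmetric, and the elementary identity $\langle A,\Ad_Q B\rangle = \langle\Ad_{Q^\top}A,B\rangle$ cancels the two cross terms, yielding $\dot{\mathcal{L}} = -\|\Delta(\hat H,p)\|^2 \leq 0$. For the second observer the extra $\tfrac{1}{3}\tr(\tilde\Gamma_1)I$ generated in $\dot E$ is absorbed against $\Delta$ because $\Delta\in\mathfrak{sl}(3)$ has vanishing trace, so the same cancellation produces $\dot{\mathcal{L}}=-\|\Delta\|^2$.

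Local Lyapunov stability and boundedness of $(E,\tilde\Gamma)$, and hence of $\hat H$ and $\hat H^{-1}$, follow immediately for initial errors in a sufficiently small neighbourhood of $(I,0)$. Since $H$ and $\Omega$ are bounded by assumption, $\dot\Delta$ is bounded, so Barbalat's lemma applied to $\int_0^t \|\Delta\|^2\,\mathrm d\tau \leq \mathcal{L}(0)$ yields $\Delta\to 0$. The algebraic consistency argument from the proof of Theorem~\ref{theo:1} (the fact that $\tr(\Delta E^{-\top})$ is a sum of non-negative Cauchy--Schwarz defects that must all vanish, combined with consistency of $\mathcal{M}_n$) then forces $E\to I$.

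The hard part is the final step: upgrading $\Delta\to 0$, $E\to I$ to the convergence $\tilde\Gamma\to 0$, since in the limit the bias dynamics reduces to the norm-preserving equation $\dot{\tilde\Gamma}=[\tilde\Gamma,\Omega_\times]$ (respectively $\tilde\Gamma_1\Omega_\times$). I would handle this via linearisation of the coupled $(E,\tilde\Gamma)$-dynamics at $(I,0)$, where the observer gain on $\Delta$ produces a Hurwitz block in the $E$ directions, the integrator on $\tilde\Gamma$ is driven by $\Ad_{\hat H^\top}\Delta$, and the $\Ad_{\hat H}\tilde\Gamma$ coupling in $\dot E$ generates an observability structure that delivers local exponential stability of the linearisation under mild excitation of the camera trajectory; a Matrosov-style auxiliary function built from $\langle\Ad_{\hat H^\top}\Delta,\tilde\Gamma\rangle$ is an equivalent route. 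The second observer is handled identically modulo the trace computation already noted.
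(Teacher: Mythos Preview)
Your Lyapunov construction, the cross-term cancellation via $\langle A,\Ad_Q B\rangle=\langle\Ad_{Q^\top}A,B\rangle$, the vanishing of the Lie-bracket term, the trace argument for the second observer, and the Barbalat step yielding $\Delta\to 0$ and then $E\to I$ all coincide with the paper's proof sketch. The divergence is only in the final step, where you flag the passage from $(\Delta,E)\to(0,I)$ to $\tilde\Gamma\to 0$ as ``the hard part'' and propose linearisation (with an excitation hypothesis) or a Matrosov auxiliary function.

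The paper handles this far more cheaply. From $\dot E=-(\Delta+\Ad_{\hat H}\tilde\Gamma)E$ together with the already-established boundedness of $E,\hat H,\hat H^{-1},\tilde\Gamma,\Omega$ (and hence of $\ddot E$), a second application of Barbalat to the convergent trajectory $E(t)\to I$ gives $\dot E\to 0$; substituting $\Delta\to 0$ and $E\to I$ back into the expression for $\dot E$ then yields $\Ad_{\hat H}\tilde\Gamma\to 0$, and boundedness of $H$ (equivalently of $\hat H$ and $\hat H^{-1}$) forces $\tilde\Gamma\to 0$. No excitation hypothesis is needed, so your linearisation route would import an assumption absent from the statement; your Matrosov alternative would work but is heavier than the direct Barbalat-on-$\dot E$ argument that the paper uses.
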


\begin{proof}[Proof sketch]
We will consider only the situation where the estimate of $\Gamma$ is used. The same arguments can also be used for the case where the estimate of $\Gamma_1$ is considered. Differentiating $e_i$ \eqref{:eq:outputErrors} and using \eqref{:eq:dot_hat_H} yields
\[\dot{e}_i=-\pi_{e_i}(\Delta+\Ad_{\hat{H}}\tilde \Gamma) e_{i} .\]
Define the following candidate Lyapunov function:
\begin{equation}
\begin{array}{lcl}
{\mathcal L}&=& {\mathcal L}_0 +\frac{1}{2k_I}||\tilde{\Gamma}\|^2 \\
&=&\sum_{i=1}^{n}\frac{k_i}{2}\left |e_i-\mathring{p}_i \right |^2 +\frac{1}{2k_I}||\tilde{\Gamma}\|^2.
\end{array}
\label{lyap-1}\end{equation}
Differentiating  ${\cal L}$ and using the fact that $\tr \left (\tilde{\Gamma}^\top \left (\left [\tilde{\Gamma},\Omega \right ]\right) \right )=0$, it follows:
\begin{align*}
\dot{{\cal L}}=&\sum_{i=1}^{n}k_i(e_i-\mathring{p}_i)^\top\dot{e}_i +\tr \left (\tilde{\Gamma}^\top\Ad_{\hat{H}^{T}}\Delta\right ).
\end{align*}
Introducing the above expression of $\dot{e}_i$ and using the fact that $\tr(AB)=\tr(B^\top A^\top)$, it follows:
\begin{align*}
\dot{{\cal L}}
=& -\!\sum_{i=1}^{n}k_i(e_i-\mathring{p}_i)^\top \pi_{e_i}(\Delta+\Ad_{\hat{H}}\tilde{\Gamma})e_{i} + \tr \!\left (\Ad_{\hat{H}^{-1}}\Delta^\top\tilde{\Gamma}\right )\\
=& \sum_{i=1}^{n}k_i\mathring{p}_i^\top \pi_{e_i}(\Delta +\Ad_{\hat{H}} \tilde{\Gamma})e_{i} + \tr\left(\Ad_{\hat{H}^{-1}}\Delta^\top \tilde{\Gamma}\right ) \\
=& \tr \!\left( \sum_{i=1}^{n}k_ie_{i}\mathring{p}_i^\top\pi_{e_i}(\Delta  +\Ad_{\hat{H}} \tilde{\Gamma}) + \Ad_{\hat{H}^{-1}}\Delta^\top \tilde{\Gamma}\right ) \\
=& \tr \!\left( \sum_{i=1}^{n} k_ie_{i}\mathring{p}_i^\top \pi_{e_i}\Delta
+ \Ad_{\hat{H}^{-1}}(\Delta^{\!\top} + \sum_{i=1}^{n}k_i e_{i}\mathring{p}_i^\top\pi_{e_i}) \tilde{\Gamma}\! \right )
\end{align*}
Finally, introducing the expression of $\Delta$ \eqref{:eq:innoGrad}, one gets
\[\dot{{\cal L}}=- \| \Delta  \|^2.  \]
The derivative of the Lyapunov function is negative semi-definite, and equal to zero when $\Delta=0$. Given that $\Omega$ is bounded, it is easily verified that $\dot{\mathcal{L}}$ is uniformly continuous and Barbalat's Lemma can be used to prove asymptotic convergence of $\Delta \rightarrow 0$. Using the same arguments as in the proof of Theorem \ref{theo:1}, it is straightforward to verify that $E \rightarrow I$. Consequently the left-hand side of the Lyapunov expression \eqref{lyap-1} converges to zero and $\|\tilde{\Gamma}\|^2$ converges to a constant.

Computing the time derivative of $E$ and using the fact that $\Delta$ converges to zero and $E$ converges to $I$, it is straightforward to show that
\[\lim_{t\rightarrow \infty} \dot{E}=-\Ad_{\hat{H}}\tilde{\Gamma}=0.\]
Using boundedness of $H$, one ensures that $\lim_{t\rightarrow \infty}\tilde{\Gamma}=0$.
\end{proof}

\section{Experimental results, Part I}\label{section:experiment}
The proposed homography observers have been experimentally evaluated using an AVT Stingray 125B camera and an Xsens MTi-G IMU attached together. This sensor system, depicted in Fig.~\ref{fig:SensorRefImage}, is also the one used in \cite{ScandaroliIROS2011}. The IMU runs at $200$ Hz and only the gyroscope measurements are used for feeding the observer. The camera provides 40 frames per second at a resolution of $800\times 600$ pixels. It makes use of a Theia SY125M wide-angle lens whose estimated intrinsic parameters are $(448.85, 450.26)$ pixels for the focal length and $(394.30, 292.82)$ pixels for the principle point.

\begin{figure}
\centering
\includegraphics[width= 3.6cm]{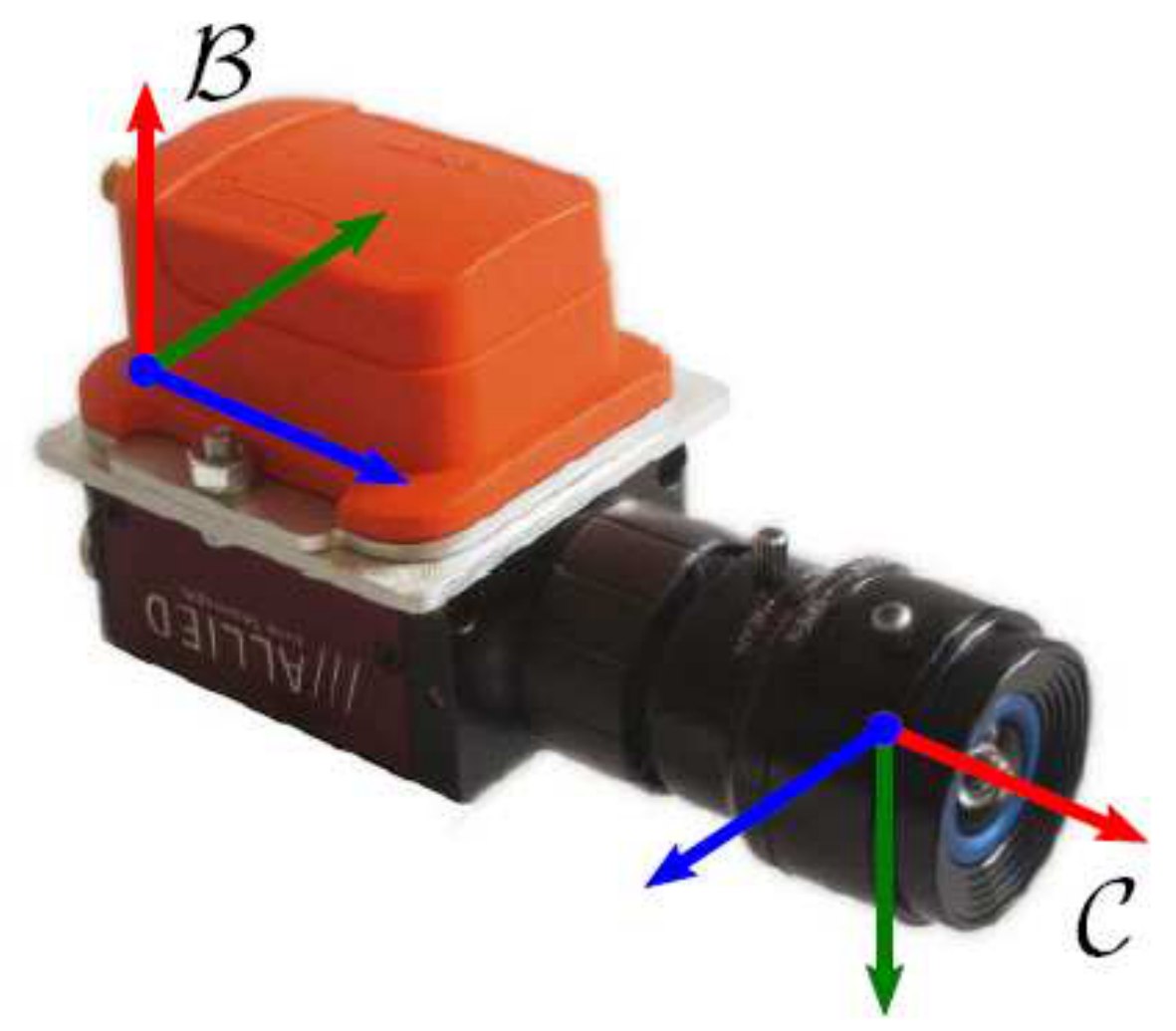} \hspace{0.2cm}
\includegraphics[width= 4cm]{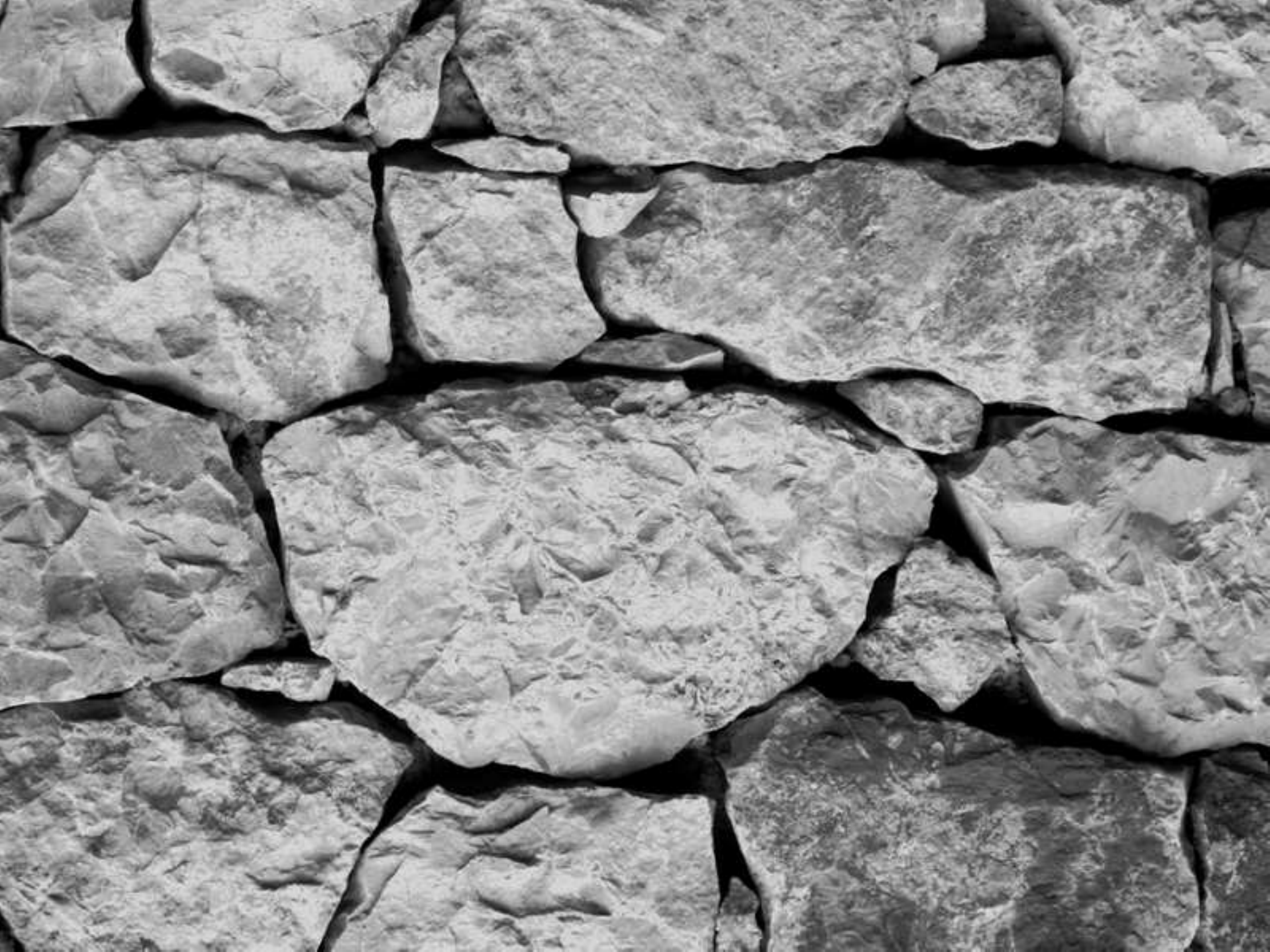}
\caption{Inertial-visual sensor (left) and reference image (right).}\label{fig:SensorRefImage}
 \centering
 \includegraphics[width= 9cm, height = 4cm]{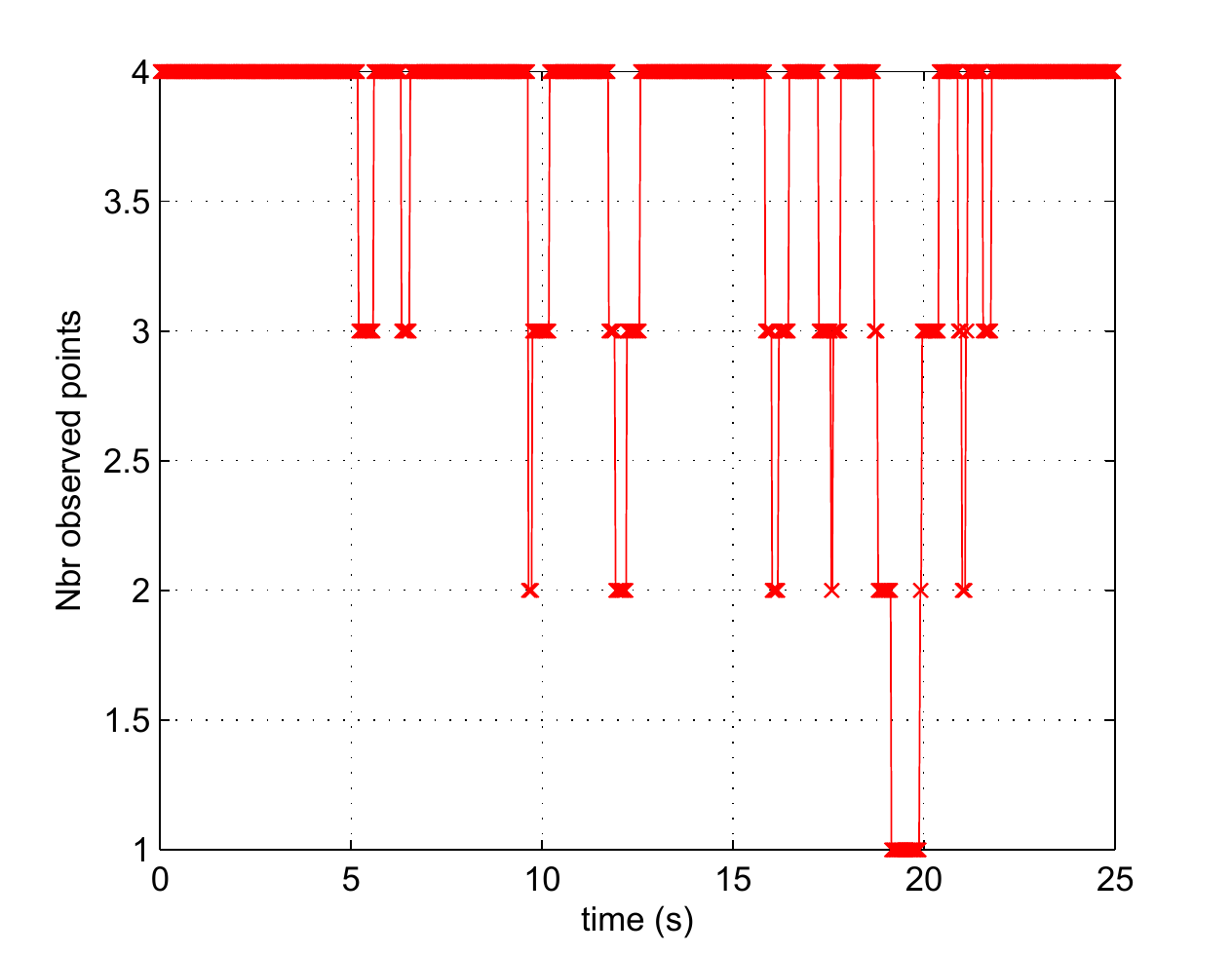}
 \caption{Number of extracted point-features vs. time.}\label{fig:nbrObsPoints}
 \centering
 \includegraphics[width= 9cm]{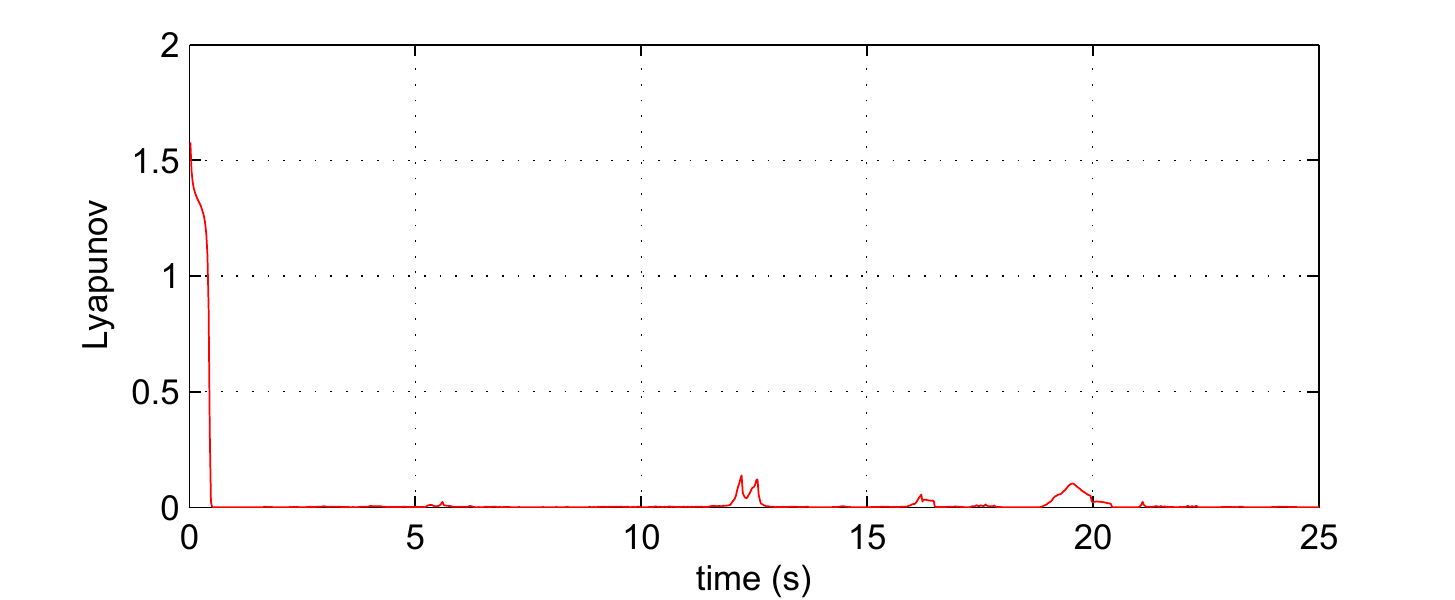}
 \caption{Cost function ${\cal L}_0 = \sum_{i=1}^4 \frac{1}{2}|e_i - \mathring{p}_i|^2$ vs. time. The ``ground-truth'' point locations $\mathring{p}_i$ are simulated using the ESM--homography.}\label{fig:costFunction} \vspace{-0.2cm}
\end{figure}

The reference target for the visual system is the image shown in Fig.~\ref{fig:SensorRefImage} printed on a $376 \times 282$ $\text{mm}^2$ sheet of paper and then placed over a planar surface parallel to the ground. For simplicity, the point-features extracted from the images are only the 4 corners of the target whenever they are visible in the field of view of the camera. This is the minimum number of points necessary for algebraically reconstructing the homography matrix \cite{Hartley2003}. We have chosen this simplistic scenario as a first experiment to be able to demonstrate the performance of our proposed observer algorithms independent of potential problems with feature detection, matching and tracking in the real image data. The latter is not the focus of this paper. The next section describes a more realistic experiment where failures of the image processing stage occur.

There are a plethora of methods for feature detection and feature matching reported in the computer vision literature, the quality of which directly impacts on the precision of the estimated homography. Since this issue is not the main preoccupation of the present study, we have opted to use an Efficient Second-Order Minimization (ESM) dense approach \cite{Benhimane-IJRR-07} to directly estimate the homography matrix for simulated ``ground-truth'' comparison and also for computing the image coordinates of the 4 corners of the target.

\begin{figure*}
\includegraphics[trim=1cm 9.5cm 0cm 10cm,width=19cm]{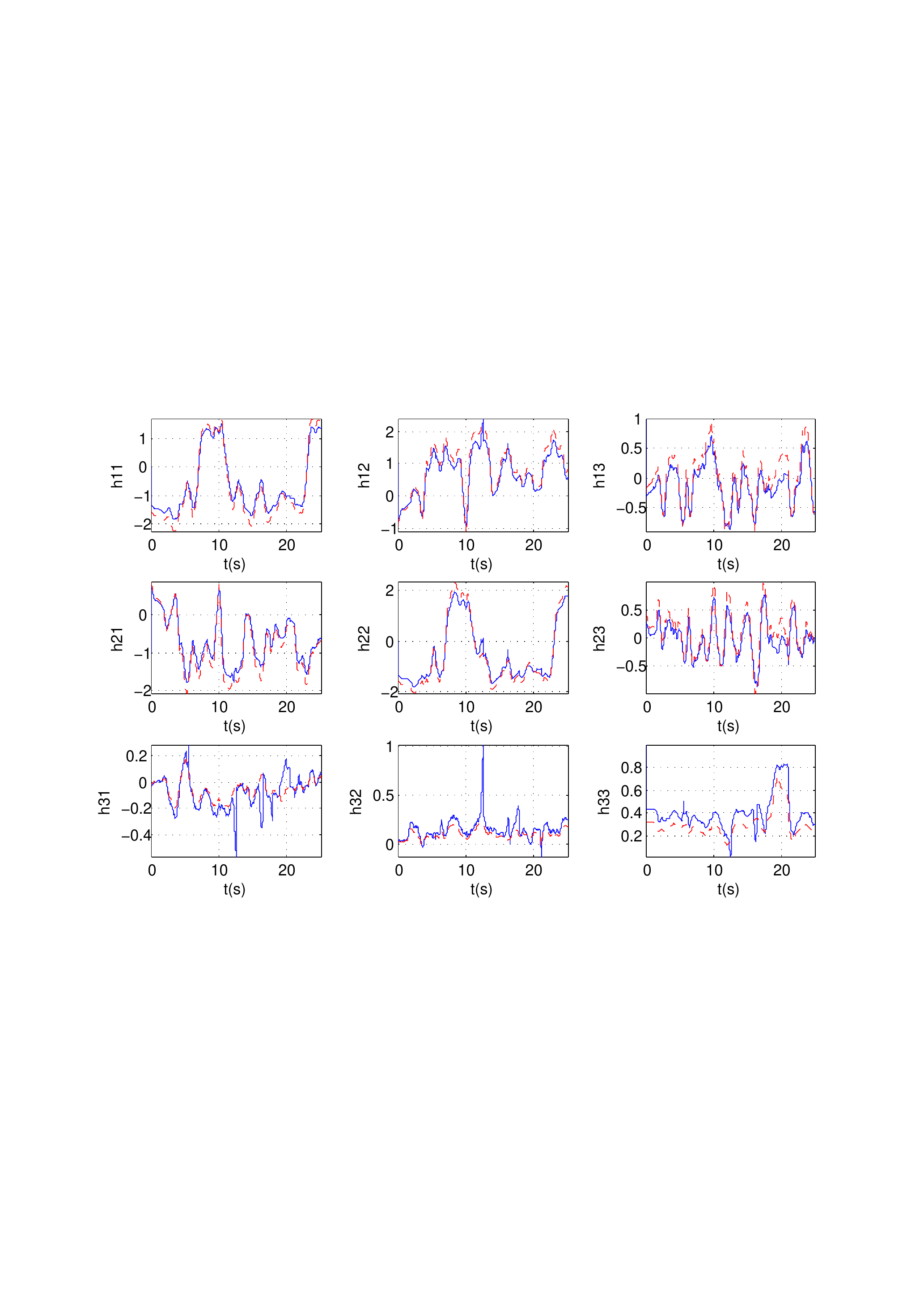}
\caption{Homography estimated by the proposed observer (solid line) and simulated ``ground-truth'' ESM--homography (dashed red line) vs. time.} \label{fig:estimatedHomography}
\end{figure*}

The experimental results reported next correspond to the observer \eqref{:eq:dot_hat_H}--\eqref{:eq:dot_hat_Gamma} whose gains are chosen as $k_P = 60$ and $k_I =1$. Let us emphasize that measurements of linear velocity are not available for the observer in this experiment. We have performed a hand-held movement of the camera such that the target's corners frequently exit the camera's field of view. Fig.~\ref{fig:nbrObsPoints} shows that the number of extracted point-features drops below $4$ at multiple instances and for extended periods of time. These critical cases allow us to test the robustness of the proposed observer. Fig.~\ref{fig:costFunction} shows the time evolution of the cost function ${\cal L}_0 = \sum_{i=1}^4 \frac{1}{2}|e_i - \mathring{p}_i|^2$ related to the homography error, cf. \eqref{lyap-1}. The ``ground-truth'' point locations $\mathring{p}_i$ are simulated using the ESM--homography. Fig.~\ref{fig:estimatedHomography} illustrates the time evolution of the homography estimated by the proposed observer and of the simulated ``ground-truth'' homography obtained by the ESM algorithm. The first video in the supplementary multimedia material presents the image sequence with highlighted red point-features, a green quadrilateral joining the estimates of the 4 corners of the target, a warped current image, together with the time evolution of the cost function ${\cal L}_0$. The video is also available at \cite{weblinkvideo1}.

Observe from Figs. \ref{fig:costFunction} and \ref{fig:estimatedHomography} that during the initial transient (before $1$s) the cost function ${\cal L}_0$ converges quickly and then remains close to zero for the following eleven seconds. The estimated homography also converges quickly and remains close to the simulated ``ground-truth'' ESM--homography. The occasional loss of point-features during that period only marginally affects the overall performance of the proposed observer. After that first period, there are three periods where the precision of the estimated homography is significantly deteriorated due to the loss of extracted point-features: from $12$s to $13$s, from $16$s to $16.5$s, and particularly from $19$s to $20.5$s. Whenever the number of extracted point-features is not sufficient ($<4$), the estimated homography is updated mainly based on the forward integration of the kinematic equation using the angular velocity measurements. Since the linear velocity is not measured in this experiment, strong divergence of the homography estimation error is almost inevitable in this situation. This is particularly the case for fast translational camera motion. However, once the set of extracted point-features becomes consistent again, the estimated homography quickly converges to the simulated ``ground-truth'' homography, demonstrating the robustness of the proposed observer algorithm.

\section{Experimental results, part II}\label{sec:experimentII}
In this section, we present an application of our approach to image stabilization in the presence of very fast camera motion, severe occlusion and specular reflections. The experiment consists of moving a camera with attached IMU rapidly in front of a planar scene as shown in Fig.~\ref{fig:e1}, top left. The aim is to track and stabilize a region of interest in the reference image, shown in Fig.~\ref{fig:e1}, bottom left. The complete result is shown in the second video provided as supplementary material and available at \cite{weblinkvideo2}, which demonstrates highly robust performance under very challenging conditions.

\begin{figure*}[ht!]
\includegraphics[width=18cm]{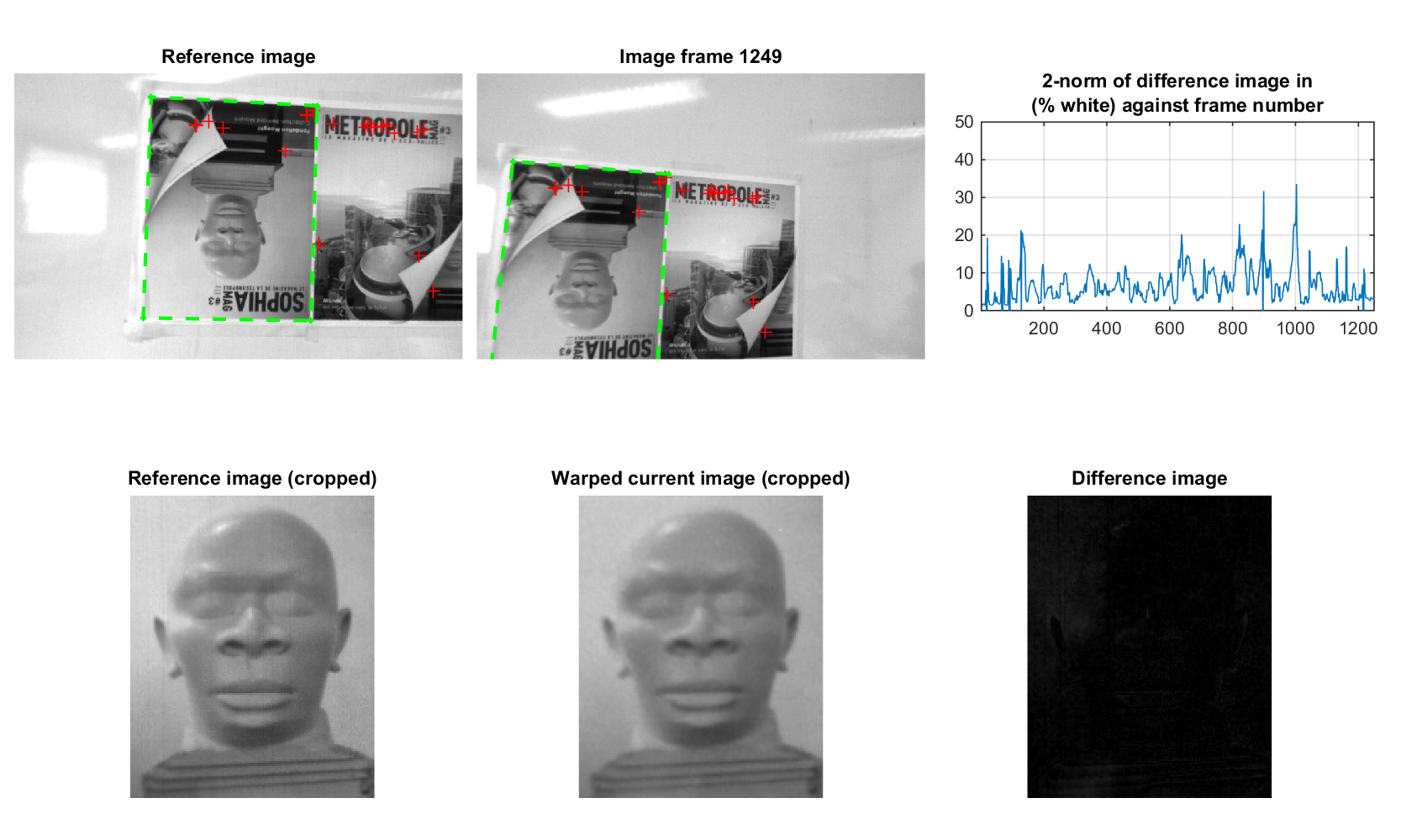}
\caption{Final result for image stabilization experiment, tracking a region of interest through 900 frames at 20 frames per second.
Bottom row from left to right: region of interest in the reference image; region of interest in the current video frame computed using the estimated homography; difference image.
Top row from left to right: reference image with point-features marked; current frame with corresponding point-features and estimated homography (represented by the green dashed rectangle); 2-norm of the reference image (100\% = all white) against frame number.}
\label{fig:e1} \vspace{-0.4cm}
\end{figure*}

The camera used is a prototype synchronized camera-IMU combination with an Aptina MT9V034 CMOS sensor and an Analog Devices  ADIS16375 MEMS IMU. The IMU runs at $200$ Hz, providing an angular velocity measurement to the observer. The camera provides $20$ frames per second at a resolution of $752\times 480$ pixels. The estimated camera parameters are $(464.66, 462.75)$ pixels for the focal length and $(385.63,227.53)$ pixels for the principle point.

Point-features are extracted using the {\tt detectSURFFeatures} routine in MATLAB R2012b with the standard recommended parameter settings and matched using MATLAB's KLT-based \cite{lucas,tomasi} {\tt matchFeatures} routine with a matching threshold of $0.05$. We have purposefully avoided using more sophisticated image processing routines in order to demonstrate the raw performance of our observer. It is likely that the results could be further improved by the use of higher quality feature matching algorithms. It is quite unrealistic to track one and the same set of features through a long video sequence, in particular given the low frame rate and comparatively rapid motion in our test sequence as well as the presence of severe occlusion (particularly strong in frames $612-617$ and frames $713-721$) and large specular reflections (e.g. frames $344-363$). We have hence opted to match features between the reference image and each subsequent frame separately. To do this, we first forward integrate the observer equations \eqref{:eq:dot_hat_H}--\eqref{:eq:dot_hat_Gamma} using only the gyroscope measurements, i.e. setting the observer gains $k_i$ ($i=1,\cdots,n$) and $k_I$ to zero. We then use the resulting homography estimate to transform the current image before we apply feature extraction and matching. A KLT-based matching algorithm is well suited to this approach since it favors translational motion over rotational motion, and most of the rotational motion has already been compensated for by forward integrating the angular velocity. We remove remaining $n$-to-$1$ matches (an artefact of MATLAB's KLT implementation) and apply a conservative symmetric bandpass median filter (60\% passband) for outlier removal. Again, we have purposefully avoided the use of more sophisticated (and much more computationally expensive) alternatives such as RANSAC \cite{ransac}. Finally, we use observer gains of $k_i=60$ ($i=1,\cdots,n$) and $k_I=0.0375$ to rapidly iterate the observer equations $1000$ times per video frame. The computational effort for this last step is negligible compared to the previous image processing steps.

Note that the procedure described above does not satisfy the assumptions of our theoretical convergence results, not least because different point-features are selected for every video frame. However, it is intuitively clear that such a procedure will still work as long as the selected feature set provides persistent excitation to the observer. Indeed, our experimental results strongly support this claim.

The experimental results (cf. the second video in the supplementary material and available at \cite{weblinkvideo2}) show good and robust performance throughout the entire video sequence, including the previously mentioned passages with severe occlusion and large specular reflections. Even when the number of usable point-feature matches drops below four for a number of frames (e.g. frames $533-541$), or when our algorithm selects a wrong feature match (e.g. frame $719$), the observer continues to track the region of interest well and quickly recovers from any tracking errors.

\section{Conclusions}
In this paper we developed a nonlinear observer for a sequence of homographies represented as elements of the Special Linear group $\mathrm{SL}(3)$. The observer directly uses point-feature correspondences from an image sequence without requiring explicit computation of the individual homographies between any two given images and fuses these measurements with measurements of angular velocity from onboard gyroscopes using the correct Lie group geometry. The stability of the observer has been proved for both cases of known full group velocity and known rigid-body velocities only. Even if the characterization of the stability domain still remains an open issue, simulation and experimental results have been provided as a complement to the theoretical approach to demonstrate a large domain of stability. A potential application to image stabilization in the presence of very fast camera motion, severe occlusion and specular reflections has been demonstrated with very encouraging results even for a relatively low video frame rate.

\section*{Acknowledgments}
This work was supported by the {\em French Agence Nationale de la Recherche} through the ANR ASTRID SCAR project ``Sensory Control of Aerial Robots'' (ANR-12-ASTR-0033), and the {\em Chaire d'excellence en robotique RTE-UPMC}, and
the {\em Australian Research Council} through the ARC Discovery Project DP0987411 ``State Observers for Control Systems with Symmetry''. Special thanks go to S. Omari for helping us with the data acquisition for experimental validation.

\end{document}